\theoremstyle{plain}
\newtheorem{theorem}{Theorem}[section]
\newtheorem{lemma}[theorem]{Lemma}
\theoremstyle{definition}
\theoremstyle{remark}
\icmltitlerunning{Seeking Next Layer Neurons’ Attention for Error-Backpropagation-Like Training in a Multi-Agent Network Framework}
\newcommand\normf[1]{{\left\Vert#1\right\Vert}_F}
\newcommand\normx[1]{{\left\Vert#1\right\Vert}_2}
\newcommand\fdot[2]{\langle #1,#2 \rangle_F}
\newcommand\conv[2]{\text{Conv}(#1,#2)}
\newcommand\deconv[2]{\text{ConvT}(#1,#2)}
\newcommand\rconv[2]{\psi(#1,#2)}
\newcommand\normalizeT[1]{\text{Normalized}(#1)}
\newcommand\diag[1]{\text{diag}(#1)}
\DeclareMathOperator*{\argmax}{arg\,max}
\def\algname{BackMAN}
\begin{document}

\twocolumn[
\icmltitle{Seeking Next Layer Neurons’ Attention for Error-Backpropagation-Like Training in a Multi-Agent Network Framework}

\begin{icmlauthorlist}
\icmlauthor{Arshia Soltani Moakhar}{sharif}
\icmlauthor{Mohammad Azizmalayeri}{sharif}\\
\icmlauthor{Hossein Mirzaei}{sharif}
\icmlauthor{Mohammad Taghi Manzuri}{sharif}
\icmlauthor{Mohammad Hossein Rohban}{sharif}
\end{icmlauthorlist}

\icmlaffiliation{sharif}{Department of Computer Engineering, Sharif University of Technology, Tehran, Iran}

\icmlcorrespondingauthor{Arshia Soltani Moakhar}{arshia.soltani@sharif.edu}
\icmlcorrespondingauthor{Mohammad Azizmalayeri}{m.azizmalayeri@sharif.edu}

\icmlkeywords{Machine Learning, ICML, Multi-agent, credit assignment}

\vskip 0.3in
]

\printAffiliationsAndNotice{}  

\begin{abstract}
Despite considerable theoretical progress in the training of neural networks viewed as a  multi-agent system of neurons, particularly concerning biological plausibility and decentralized training, their applicability to real-world problems remains limited due to scalability issues. In contrast, error-backpropagation has demonstrated its effectiveness for training deep networks in practice. In this study, we propose a \textbf{local objective} for neurons that, when pursued by neurons individually, align them to exhibit similarities to error-backpropagation in terms of efficiency and scalability during training. 
For this purpose, we examine a neural network comprising decentralized, self-interested neurons seeking to maximize their local objective ---attention from subsequent layer neurons--- and identify the optimal strategy for neurons. We also analyze the relationship between this strategy and backpropagation, establishing conditions under which the derived strategy is equivalent to error-backpropagation. Lastly, we demonstrate the learning capacity of these multi-agent neural networks through experiments on three datasets and showcase their superior performance relative to error-backpropagation in a catastrophic forgetting benchmark.
\end{abstract}

\section{Introduction}
\label{submission}
Deep learning has achieved state-of-the-art performance across various domains. For example, in computer vision, deep learning models have outperformed human capabilities in image recognition and generation tasks \cite{dodge2017study, vit, resnet, imageGeneration}. In natural language processing, deep learning models can generate text \cite{llama, radford2018improving}, which is often indistinguishable from human-authored content.
Additionally, they have defeated world champions in games with vast search spaces like Go and Dota 2 \cite{go, dota2}.

Despite these achievements, deep learning models are not without their limitations. As we continue to develop and refine these models, we encounter fundamental drawbacks that contrast with the adaptability and resilience observed in natural systems. For example, deep learning models can be easily deceived in image recognition tasks by imperceptible alterations to input images, such as modifying a few pixels, that would not mislead human observers \cite{szegedy2014intriguing, goodfellow2015explaining}. Furthermore, these models struggle to adapt to slightly different situations without undergoing retraining. An instance of this limitation is the significant decline in the performance of speech recognition models in the presence of background noise \cite{chen2022noiserobust}.

In contrast, intelligence exhibited by collective systems, as observed in nature, tends to be adaptive, robust, and makes fewer rigid assumptions about environmental configurations \cite{collectiveIntelligenceSurvay}. For example, such behaviors and properties can be seen in ant colonies, which can readily adapt to novel environments. Inspired by these characteristics, there has been a growing interest in developing neural networks that emulate these properties through the employment of multi-agent systems in the machine-learning domain \cite{lowe2020multiagent, jaques2019social, Chalk598086}.

In the context of multi-agent systems within machine learning, two primary perspectives can be identified. The first perspective involves constructing an environment wherein multiple agents interact with one another, with each agent employing a neural network to make decisions \cite{MARLSurvey,suarez1, suarez2}. Although some research within this perspective shares a single neural network across multiple agents, the neural network is consistently regarded as a unified entity \cite{MARLsinglenetwork1, MARLsinglenetwork2}. The second perspective, which is the focus of this paper, treats a single neural network as a multi-agent system itself. In this view, individual neurons \cite{AuctionNeuron}, layers \cite{mostafa2017deep}, or segments of layers within a neural network are regarded as discrete agents that communicate with each other by exchanging messages throughout the network, ultimately resulting in the neural network's output.

Three research fields have explored modeling neural networks as collections of individual agents: biologically plausible networks, collective intelligence systems, and decentralized neural networks. Inspired by the learning in biological systems, some recent work sought an alternative to the error-backpropagation algorithm, given ample evidence that error-backpropagation does not occur in the brain \cite{naturereviewsBP,Richards2019,benjio}. Consequently, several papers suggest utilizing multi-agent modeling for biologically plausible learning approaches \cite{ott2020giving, balduzzi2014kickback,balduzzi2014cortical,Lewis2014}. From a collective intelligence standpoint, this modeling is intriguing, as numerous simple neurons (e.g., linear functions) communicate and achieve higher intelligence capable of, for instance, classifying images \cite{collectiveIntelligenceSurvay}. Lastly, another study employs reinforcement learning agents that auction their output to the next layer neurons, aiming to create a decentralized neural network with components managed by different individuals \cite{AuctionNeuron}.

Although these multi-agent modeling studies present intriguing motivations, they fail to fully realize their potential due to their limited ability to scale in deep networks. Deep neural networks are essential components of most machine learning advances \cite{szegedy2014going} and are critical properties of animal brains. Therefore without scalability, the practical application or usage of these multi-agent studies as a basis for understanding animals brains or deep neural networks is minimal.

In contrast, error-backpropagation can readily train deep neural networks \cite{szegedy2014going} and achieve state-of-the-art performance across numerous tasks \cite{vit,llama,speechRecognition}. This characteristic of error-backpropagation motivates us to develop a method that adopts a multi-agent perspective on neural networks while leveraging an error-backpropagation-like algorithm to train deeper networks.

On this basis,  we propose a novel and straightforward local objective for self-interested agents in a multi-agent neural network that closely resembles the behavior of error-backpropagation and gradient descent in the model.  In our method, \textbf{each neuron is considered an agent and aims to maximize the attention it receives from subsequent layer neurons}. Attention is measured by the $\ell_2$-norm of the weight vector connecting a neuron to the next layer of neurons. We refer to the objective as ``local'' because it is only dependent on the environment that surrounds each neuron. We demonstrate in Theorem \ref{semi_is_nash_label} that under certain reasonable assumptions, the proposed objective for the agents (neurons) results in a training algorithm for the network which is similar to error-backpropagation. We call this new training algorithm \algname{}.

In \algname{}, neurons combine the backpropagated signals (gradients) from subsequent layers in a manner identical to error-backpropagation. However, unlike error-backpropagation, each neuron multiplies the signals by a scalar before passing them to the preceding layer. This scaling allows neurons to consider the preferences of neurons in the subsequent layer equally, in contrast to error-backpropagation, where some neurons may propagate stronger gradients.

Although the backpropagated signals are calculated similar to gradients, we omit the use of term ``gradients'' and use ``signals'' instead. To check the similarities between our proposed method and error-backpropagation, we investigate conditions under which the scalar is one for all neurons, implying conditions where the resulting behavior equates to error-backpropagation. Although this condition may be unrealistic, we contend that this equivalence could help in understanding error-backpropagation from a multi-agent perspective.

Furthermore, we apply our theoretical understanding to practice, using MNIST, CIFAR-10, and CIFAR-100~\cite{mnist, cifar} datasets to achieve results comparable to error-backpropagation with identical model architectures. Additionally, to demonstrate that our proposed learning algorithm exhibits behaviors that could be expected from multi-agent neural networks, we evaluate our method on split-MNIST, a catastrophic forgetting benchmark. We observe improvement over plain backpropagation, resembling the performance of other biologically inspired learning algorithms \cite{zenke2017continual, synapticToforgetting}.

In summary, our contributions have four parts. 
First, our model helps in understanding error-backpropagation by relating it to multi-agent systems, supported by our theoretical findings. 
Second, our proposed learning algorithm exhibits improvement over error-backpropagation in a forgetting task, which aligns with the interests of multi-agent systems and biologically plausible neural networks. This suggests that the proposed objective for neurons merits further investigation in multi-agent systems. 
Third, our algorithm is the first among multi-agent neural network algorithms capable of training relatively deeper neural networks. This study paves the way to advance the field of multi-agent machine learning through further exploration of the synergistic relationship between multi-agent neural networks and error-backpropagation.
Lastly, and most significantly, we introduce an objective that is \textbf{local} and solely reliant on the surrounding environment of neurons. This objective encourages Self-Interested neurons to enhance the collective performance of the neural network.

\section{Related Work}
\textbf{Error-backpropagation as a multi-agent system trainer}: The authors of \cite{balduzzi2014kickback}, akin to our work (Section \ref{Relation to gradient descent:label}),
proposed a scenario in which self-interested neurons behave as though the system were trained using error-backpropagation. Nonetheless, in their proposed scenario, gradients of the neurons' output with respect to the global loss must be externally supplied to neurons at each iteration, as these signals influence the neurons' objective. Therefore, to align their proposed algorithm with gradient descent, this number should be calculated for each neuron on each sample and then supplied to the neuron. These information-rich signals contrast with our work, in which neurons receive less informative signals in cases where the system trains, like error-backpropagation. In our setting, each neuron will be supplied with only one number regardless of the number of samples in that iteration to act as error-backpropagation. 

\textbf{$\ell_2$-norm as worth}: In \cite{Lewis2014}, the authors propose a biologically inspired multi-agent system. In their system, neurons try to maximize their worth, which is determined by the difference in model loss if a specific neuron is removed from the network. They demonstrate that neurons can approximate their worth by $\ell_2$-norm of their output connections, akin to our work. However, they did not study conditions under which all neurons choose their optimal actions since their primary goal was to find learning rates - plasticity ratio - for synaptic connections rather than proposing a training algorithm.

\textbf{Multi-Agent Reinforcement Learning}: In the field of multi-agent reinforcement learning, researchers typically focus on how agents \textbf{learn} the strategies \cite{LearningToShare, MARL2,  MARL3, MARL4, MARL5, MARL6}. This contrasts with our work, in which we \textbf{theoretically identify} the best strategy for agents. We hypothesized that these theoretically identified strategies could be learned through evolutionary processes or in an alternative way. Learning these strategies could be regarded as meta-learning, which has been heavily investigated recently \cite{meta1, meta2, meta3}. However, this discussion is beyond the scope of this paper.

\textbf{Reinforcement Learning Applied to Neurons}: Some studies explore cases where reinforcement learning is applied to individual neurons. In \cite{AuctionNeuron}, neurons conduct auctions to buy/sell signals from/to other neurons. They justify this approach by proposing its application in the distributed training of neural networks. Conversely, in \cite{wang2015reinforcement}, neurons perform Gibbs sampling and attempt to minimize model loss. The author of \cite{ott2020giving} proposes a cooperative-competitive framework where each neuron is an individual agent. On the cooperative side, these neurons attempt to optimize to enhance the entire network’s performance. On the competitive side, neurons receive a local loss based on factors such as sparsity. The underlying intuition is that a layer's neuron activations should be sparse since a layer providing uniformly active neuron activations does not convey any information to subsequent layers. Consequently, neurons that violate the sparsity requirement incur a penalty.

 On a related note, \cite{balduzzi2014kickback} suggested Kickback, a biologically plausible learning algorithm in which a neuron's objective is a function of the $\ell_1$-norm of its output weights and model loss. However, they could only train networks with three hidden layers and in limited settings with only two outputs. The primary distinction between these works and our study is that neurons must learn their best response in these works. In contrast, we theoretically determine their optimal strategy and have neurons employ this strategy from the onset of training. This characteristic enables us to train larger networks compared to the abovementioned works.

\textbf{Layerwise Learning}: Unlike previous works, layerwise learning approaches focus on training each layer independently, proposing distinct objectives for each layer rather than each neuron. \cite{mostafa2017deep} introduced a framework in which the output of each layer is connected to a linear layer with random weights. The objective of each layer is to minimize the classification loss of the linear classifier connected to its output. Therefore, the objective of each layer differs from those of other layers. \cite{nøkland2019training} adopted a similar framework, with each layer being trained using a combination of classification loss and similarity loss between mini-batch samples. \cite{hinton2022forwardforward} also implemented a similar setup; however, each layer classifies between in- and out-distribution data using the norm of the layer's output.


\section{Method}

In this section, we describe our approach to modeling a neural network as a multi-agent system. In Section~\ref{notation}, we introduce our notation and establish the context for our analysis. Critically, within Section~\ref{notation}, we define the utility function of neurons (the function they seek to maximize)  and the network weights, which each neuron controls, along with their weight update limitations (referred to as the action space in game theory literature). 

In Section~\ref{Semi-gradient-descent algorithm:label}, we introduce the \algname{} (\textbf{Back}propagation aware \textbf{M}ulti-\textbf{A}gent \textbf{N}etwork) training algorithm. \algname{} directly stems from our framework, which describes the behavior of neurons and how the network weights are updated, assuming each neuron acts optimally (neuron rationality) within the proposed framework.

Section~\ref{Relation to gradient descent:label} seeks to demonstrate that \algname{} and error-backpropagation are notably similar. Thus, this section investigates a condition under which the system behaves as if it were trained by gradient descent. The similarity between \algname{} and error-backpropagation supports our framework, because even with a group of self-interested, decentralized neurons, their behavior is same as error-backpropagation algorithm, which is centralized. Furthermore, this similarity suggests that error-backpropagation can be understood and analyzed through a multi-agent framework.

Finally, Section~\ref{Nash Equilibrium:label} presents our primary theoretical result, elucidating the rationale behind why \algname{} emerges as the neuronal behavior in the proposed framework. More specifically, Section~\ref{Nash Equilibrium:label} demonstrates that \algname{} is a Nash equilibrium for the proposed setting under some reasonable conditions.  Therefore, under these conditions, each neuron's best response is to act according to \algname{}, and since they are rational and act optimally, they will choose the \algname{} response.

\subsection{Setting}
\label{notation}
\textbf{Notation}: We consider a multilayer perceptron (MLP) network  and a dataset with $m$ samples, each having $n_x$ features. The input matrix is represented by $X \in \mathds{R}^{m \times n_x}$. In this context, the weight $w^{[l]}_{i,j}(\tau)$ signifies the connection weight between neuron $j$ in layer $l-1$ and neuron $i$ in the subsequent layer at iteration $\tau$. The output value of layer $l$ is denoted by $a^{[l]}(\tau) \in \mathds{R}^{m \times n_l}$ and pre-activation values denoted by $z^{[l]}(\tau) \in \mathds{R}^{m \times n_l}$. Therefore, $a^{[l]}_{k,i}(\tau)$ represents the output of neuron $i$ of layer $l$ for sample $k$ at iteration $\tau$. Assuming that neurons in layer $l$ have activation function $f^{[l]}$, we can write 
\begin{equation}
\label{forward_pass}
a^{[l]}(\tau)=f^{[l]}(a^{[l-1]}(\tau). {w^{[l]}}^\top(\tau) ) = f^{[l]}(z^{[l]}(\tau)).
\end{equation}
For simplicity, 
we may exclude $\tau$ from the equations; however, when omitted, 
it implicitly refers to the iteration $\tau$. $M^{[l]} \in\mathds{R}^{m \times n_l}$ is a matrix indicating the sloop of activation function in samples for neurons of layer $l$. More formally 
$M^{[l]}_{k,i} = {f^{[l]}}^{'} (z^{[l]}_{k,i})$.
Thus, in the case where neurons use identity function, 
$\forall_{k,i} M^{[l]}_{k,i} = 1 $.
Also, when a neuron employs ReLU as its activation function,  
$M^{[l]}_{k,i} = \mathds{I}(a^{[l]}_{k,i} > 0)$.

Additionally, let $\ell$ denote the network loss, and  $L$~signifies 
the total number of layers in the model. Furthermore, if $A \in \mathds{R}^{a \times b}$ be an arbitrary matrix, 
then $A_i$ denotes row $i$ of matrix $A$, such that $A_i \in \mathds{R}^{1 \times b}$. Similarly, $A_{:i}$ represents column $i$, such that $A_{:i} \in \mathds{R}^{a \times 1}$.

\textbf{Setup}: In our setting, each neuron controls the weights of its input connections, so $\forall_j w^{[l]}_{i,j}$ is controlled by neuron $i$ in layer $l$. Neurons modify their input connection weights simultaneously after each forward pass. The utility function of these neurons is defined as the $\ell_2$ norm of its output connection.
Specifically, the objective function of neuron $i$ in layer $l$ is defined by 
\begin{equation}
\label{utility_equation:body}
    U^{[l]}_{i}(\tau)=\normx{w^{[l+1]}_{:i}}=\sqrt{\sum_k (w^{[l+1]}_{k,i}(\tau))^2}.
\end{equation}
 The last layer neurons do not have output connections, so this definition is not applicable. We assume those neurons' objectives are to minimize the model loss as
\begin{equation}
    U^{[L]}_{i}(\tau)=-\ell(\tau).
\end{equation}
Intuitively, a neuron utility is maximized only if the neurons in the subsequent layer find that neuron more informative and increase the weight of the connection to that neuron. This happens only if that neuron can update its input weights such that more useful information is passed through its output. 

As agents control only their input weights, an \textit{agent's response} refers to the weight updates it performs on its weights. So the response of neuron $i$ in layer $l$ at iteration $\tau$ is equal to $w^{[l]}_i(\tau+1)-w^{[l]}_i(\tau)$. The \textit{best response} is the action that produces the most favorable outcome for an agent, considering other agents' responses as given \cite{bestresponse}.

\textbf{Assumptions}: As commonly found in game theory papers, we assume that all neurons -agents- are rational, meaning that neurons choose the strategy that maximizes their expected utility. However, in our setting, neurons are also greedy, which implies that they value short-term results and may not necessarily consider the long-term consequences of their choices. In the theoretical analysis, we assume that the forward passes are applied to the entire dataset rather than to batches or single samples. Additionally, each neuron has a constraint on the $\ell_2$ norm of the updates it applies to its input weights. Specifically, if neuron $i$ updates its input weight from $w^{[l]}_i(\tau)$ to $w^l_i(\tau+1)$, we require 
\begin{equation}
    \normx{w^{[l]}_i(\tau)-w^{[l]}_i(\tau+1)} \leq \alpha^{[l]}_i(\tau),
    \label{updateAssumption}
\end{equation}
where $\alpha^{[l]}(\tau) \in \mathds{R}^{n_l}$ is a predefined constant.

\subsection{\algname{} Algorithm}
\label{Semi-gradient-descent algorithm:label}
In the previous Section, the framework within which neurons operate was delineated. This Section explains how neurons would behave if they operate rationally within this framework. We term this behavior \algname{} and delve into it in this section.

Neurons adjust their connections weights to neurons in the preceding layer based on the values generated by these preceding neurons for each samples. Since the neurons utility is based upon their output connection weights, their utility is, by extension, indirectly dependent upon the values they generate per sample. Consequently, neurons have preferences regarding how their generated values should change to maximize their future utility. \algname{} calculates these neuronal preferences, which we denote as 
$g$, and updates network weights based on these preferences.

More formally, the matrix $g^{[l]} \in \mathds{R}^{m \times n_l}$ represents the neurons' preferences regarding their pre-activation values’ direction, aiming to maximize their utility. For example, if $g^{[l]}_{s,i} < 0$, neuron $i$ in layer $l$ wants its pre-activation value on sample $s$ to decrease. Conversely $g^{[l]}_{s,i} > 0$ indicates a desire for an increase.

Using $g$ values, the weights are updated as follows:
\begin{equation}\label{eq:update_rule}
    w^{[l]}(\tau+1) = w^{[l]} + {g^{[l]}}^\top.a^{[l-1]}.
\end{equation}  ${g^{[l]}}^\top.a^{[l-1]}$ allows neurons to update their input weights in a direction that aligns with their pre-activation direction preferences.

${g^{[l]}}^\top.a^{[l-1]}$ updates the weights and we had an assumption regarding limitation on the amount of update each neuron can apply to its weights. Therefore, a normalization factor should be applied to neurons' preferences before using them to update weights. Thus, $g$ is calculated as:
\begin{equation}
\label{genral_g}
    g^{[l]} \coloneqq  r^{[l]}. diag(c^{[l]}),
\end{equation}
where $r^{[l]}$ can be interpreted as an unnormalized counterpart of $g^{[l]}$, and $c^{[l]} \in \mathds{R}^{n_l}$ as the normalization factor that enforces 
the update assumption (Eq.~\ref{updateAssumption}) in the update rule (Eq.~\ref{eq:update_rule}). 

The normalization factor $c^{[l]}$ can be calculated by re-writing Assumption~\ref{updateAssumption} as:
\begin{equation*}
   \begin{aligned}
       \alpha^{[l]}_i\geq\normx{w^{[l]}_i(\tau)-w^{[l]}_i(\tau+1)}=\normx{{{(g^{[l]}}^\top.a^{[l-1]})}_i} = \\ \normx{{{(( r^{[l]}. diag(c^{[l]}))}^\top.a^{[l-1]})}_i}= c^{[l]}_i\normx{{{{(r^{[l]}}^\top.a^{[l-1]})}_i}} \\\Rightarrow c^{[l]}_i \leq \frac{\alpha^{[l]}_i}{\normx{{{(r^{[l]}}^\top.a^{[l-1]})}_i}},
\end{aligned} 
\end{equation*}
with $\alpha^{[l]}_i$ being the predefined constant in \eqref{updateAssumption}. So it suffices that we select $c^{[l]}_i$ as:
\begin{equation}
\label{general_c}
    c^{[l]}_i = \frac{\alpha^{[l]}_i}{\normx{{r^{[l]}_{:i}}^\top.a^{[l-1]}}},
\end{equation}
to enforce Assumption~\ref{updateAssumption} in our updates.

Regarding the calculation of unnormalized neurons’ preferences, or $r$, for the last layer, 
$r^{[l]}$ is calculated as:
\begin{equation}
\label{lastLayer_R}
    r^{[L]}  \coloneqq  -\frac{\partial \ell}{\partial z^{[L]} },
\end{equation}
but for the other layers, $r^{[l]}$ is calculated based on $g^{[l+1]}$ as:
\begin{equation}
\label{hiddenLayer_R}
    r^{[l]}=(g^{[l+1]}.w^{[l+1]}) \odot M^{[l]}
\end{equation}

To make these equations more understandable, we try to intuitively describe their role. $r$ and $g$ are similar to gradients in error-backpropagation. Using this perspective and assuming $g^{[l]}=\frac{\partial \ell}{\partial z^{[l]}} $ Eq.~\ref{eq:update_rule} aligns with the gradient descent update since:
\[ 
{g^{[l]}}^\top.a^{[l-1]} = \frac{\partial \ell}{\partial z^{[l]}}.a^{[l-1]} = \frac{\partial \ell}{\partial w^{[l]}}.
\]
Eq.~\ref{lastLayer_R} clearly indicates the unnormalized neurons’ preferences for pre-activation values, and requires no further clarification.

Further, if we assume $g^{[l+1]}=\frac{\partial \ell}{\partial z^{[l+1]}}$, Equation~\ref{hiddenLayer_R} equals the chain rule since:
\begin{equation*}
   \begin{aligned}
r^{[l]}=(g^{[l+1]}.w^{[l+1]}) \odot M^{[l]}= (\frac{\partial \ell}{\partial z^{[l+1]}}.w^{[l+1]}) \odot M^{[l]}\\ = 
\frac{\partial \ell}{\partial a^{[l]}} \odot  {f^{[l]}}^{'} (z^{[l]})= \frac{\partial \ell}{\partial z^{[l]}}.
   \end{aligned}
\end{equation*}

Although we assumed $g^{[l+1]}=\frac{\partial \ell}{\partial z^{[l+1]}}$ to intuitively explain the role of the equations, this assumption is not far from being accurate.  In Lemma~\ref{g_equal_delta_lemma} in Appendix \ref{relation to gradient descent:Appendix}, we have stated conditions where $g$ is equal to gradients. $r$ is the unnormalized counterpart of $g$, meaning if column $i$ of $r$ is multiplied by $c_i$, it will result in column $i$ in $g$. Also, $c_i$ comes to enforce \eqref{updateAssumption}.

Collectively, these equations delineate how the weights of input connections of each neuron in the network are updated, considering the constraints imposed on neurons' behavior, such as inequality~\eqref{updateAssumption}. In Algorithm \ref{alg:semiGD}, we summarize the \algname{} algorithm for the case that neurons use a  piecewise linear activation as activation. For CNNs we provide the algorithms in Appendix~\ref{Algorithms:Appendix}. 

\begin{algorithm}
\caption{One Iteration in \algname{} Algorithm}  \label{alg:semiGD}
\begin{algorithmic}[0] 
\State $a^{[0]} \gets X$ \Comment{forward pass}
\For{l:=1}{L} 
    \State $a^{[l]} \gets f^{[l]}(a^{[l-1]}.{w^{[l]}}^\top)$
\EndFor
\\    
\State $r^{[L]} \gets -\frac{\partial \ell}{\partial z^{[L]}}$ \Comment{backward pass}
\\
\For {l:={L}}{1}
    \State $\forall_{i\leq {n_l}}: c^{[l]}_i \gets \frac{\alpha^{[l]}_i}{\normx{({r^{[l]}}^\top.a^{[l-1]})_i}}$\\ 
    \State \ \ \ \ \ \ $g^{[l]} \gets r^{[l]}.diag(c^{[l]})$ 
    \State \ \ \ \ \ \  $r^{[l-1]} \gets (g^{[l]}.w^{[l]}) \odot M^{[l]}$ 
    
\EndFor
\\
\For{l:=1}{L} \Comment{weight update}
    \State $w^{[l]} \gets w^{[l]}+ {g^{[l]}}^\top.a^{[l-1]}$
\EndFor
\end{algorithmic}
\end{algorithm}

\subsection{Relation to Gradient Descent}
\label{Relation to gradient descent:label}
As can be observed from Algorithm \ref{alg:semiGD}, the overall procedure is quite similar to gradient descent. Specifically, if \mbox{$\forall_{i,l}: c^{[l]}_i = 1$}, these two algorithms are equivalent. We prove this fact in Theorem \ref{semi_equal_GD_label}.

\begin{theorem}
\label{semi_equal_GD_label}
If \mbox{$\forall_{i,l}: c^{[l]}_i = 1$} and all neurons use a piecewise linear function as their activation function, then using algorithm \algname{} 
\mbox{$\forall_{l,i,j}: w^{[l]}_{i,j}(\tau+1)=w^{[l]}_{i,j}-\frac{\partial \ell}{\partial w^{[l]}_{i,j}}$}.
\end{theorem}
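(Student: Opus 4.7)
The plan is a backward induction on the layer index $l$, running from $L$ down to $1$, whose induction hypothesis states that $g^{[l]} = -\tfrac{\partial \ell}{\partial z^{[l]}}$. Once this is established, combining it with the weight update rule in equation~\eqref{eq:update_rule} and the identity $\tfrac{\partial \ell}{\partial w^{[l]}} = \tfrac{\partial \ell}{\partial z^{[l]}}^\top \! \cdot a^{[l-1]}$ immediately yields the theorem.

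For the base case $l = L$, the assumption $c^{[L]}_i = 1$ for all $i$ and equation~\eqref{genral_g} give $g^{[L]} = r^{[L]}$, which by definition~\eqref{lastLayer_R} equals $-\tfrac{\partial \ell}{\partial z^{[L]}}$. For the inductive step, suppose $g^{[l+1]} = -\tfrac{\partial \ell}{\partial z^{[l+1]}}$. By equation~\eqref{hiddenLayer_R},
\[
r^{[l]} = (g^{[l+1]} \cdot w^{[l+1]}) \odot M^{[l]} = \Bigl(-\tfrac{\partial \ell}{\partial z^{[l+1]}} \cdot w^{[l+1]}\Bigr) \odot M^{[l]}.
\]
The chain rule applied to the forward equation~\eqref{forward_pass} gives $\tfrac{\partial \ell}{\partial a^{[l]}} = \tfrac{\partial \ell}{\partial z^{[l+1]}} \cdot w^{[l+1]}$ and $\tfrac{\partial \ell}{\partial z^{[l]}} = \tfrac{\partial \ell}{\partial a^{[l]}} \odot {f^{[l]}}'(z^{[l]})$; since the activation is piecewise linear, $M^{[l]}_{k,i} = {f^{[l]}}'(z^{[l]}_{k,i})$ really is the slope used in the chain rule, so $r^{[l]} = -\tfrac{\partial \ell}{\partial z^{[l]}}$. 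Applying $c^{[l]}_i = 1$ again closes the induction: $g^{[l]} = r^{[l]} \cdot \diag{c^{[l]}} = r^{[l]} = -\tfrac{\partial \ell}{\partial z^{[l]}}$.

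Finally, substituting into the update rule in Algorithm~\ref{alg:semiGD} gives
\[
w^{[l]}(\tau+1) = w^{[l]} + {g^{[l]}}^\top \! \cdot a^{[l-1]} = w^{[l]} - \Bigl(\tfrac{\partial \ell}{\partial z^{[l]}}\Bigr)^{\!\top} \! \cdot a^{[l-1]} = w^{[l]} - \tfrac{\partial \ell}{\partial w^{[l]}},
\]
which is the desired gradient descent step with unit learning rate, entrywise yielding the stated equality for every $i,j$.

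The only subtlety I anticipate is justifying that $M^{[l]}$ coincides with ${f^{[l]}}'(z^{[l]})$ in the chain rule when $f^{[l]}$ is piecewise linear but not differentiable at the kinks (e.g.\ ReLU at $0$). This is exactly why the hypothesis restricts to piecewise linear activations: away from a measure-zero set of pre-activations the slope is well defined, and the convention $M^{[l]}_{k,i} = \mathds{1}(a^{[l]}_{k,i} > 0)$ matches the subgradient used in standard backpropagation implementations. So the argument is essentially a bookkeeping check that the \algname{} recursion, when $c \equiv 1$, is literally the backpropagation recursion, and the main work is simply making the induction explicit.
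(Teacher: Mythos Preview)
Your proposal is correct and follows essentially the same approach as the paper: the paper proves the identity $g^{[l]} = -\tfrac{\partial \ell}{\partial z^{[l]}}$ by backward induction on $l$ as a separate lemma (Lemma~\ref{g_equal_delta_lemma}), and then derives the weight-update equality exactly as you do. The structure, base case, inductive step, and final substitution into the update rule are all the same; your discussion of the kink subtlety is an additional remark the paper does not make explicit here.
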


Proofs of all theorems are provided in Appendix \ref{theoreticalAppendix}. In this theorem, we prove that if $\forall_{i,l}: c^{[l]}_i=1$, our algorithm is identical to gradient descent. Since $c^{[l]}_i \propto \alpha^{[l]}_i$, by selecting the appropriate update limit -$\alpha$- for each neuron in each iteration, we could theoretically make these two algorithms identical.

\begin{table*}[t]
\caption{Performance Metrics across Multiple Datasets and Architectures. ``NA'' indicates an inability to train the network, resulting from either unavailable code from the authors or the unscalability of the approach. The table demonstrates that our method achieves comparable results to error-backpropagation.}
  \label{ClassificationTabel}
  \centering
  \resizebox{1\textwidth}{!}{
  \begin{tabular}{lcccc}
    \toprule
    Method     & MNIST & CIFAR-10 (LeNet-5) &   CIFAR-10 (ResNet-18)  & CIFAR-100 (ResNet-18) \\
    \midrule
     Error-Backpropagation (EBP) & $99.10$ & $71.24$&92.8&75.61\\
Adaptive DropConnect \cite{AuctionNeuron} &$98.64$ & $60.16$& NA & NA \\
Forward-Forward\cite{hinton2022forwardforward}&92.37& NA & NA & NA \\
Direct feedback alignment\cite{Directfeedbackalignment}&95.97&	45.79&74.37&38.94\\
Feedback alignment\cite{feedbackalignment}&	95.15&	53.31&	63.79&30.54\\
\algname{} & $98.93$ & $66.27$ & 91.33&63.00\\
    \bottomrule
  \end{tabular}
  }
\end{table*}

\begin{table*}[bh]
  \caption{This table displays the accuracy achieved for each task, and the average accuracy in the catastrophic forgetting experiment, with values, averaged over 100 seeds.}
  \label{forgetting tabel}
    \setlength{\tabcolsep}{4pt}
  \centering
  \resizebox{1\textwidth}{!}{
  \begin{tabular}{lcccccc}
    \toprule
    Method     & Task 1 & Task 2 & Task 3 & Task 4 & Task 5 & Average \\
    \midrule
     EBP&$59.41\pm18.00$&$65.63\pm18.33$&$71.15\pm16.13$& $78.57\pm12.57$&$93.39\pm6.54$&$73.47\pm4.81$\\
    \algname{}&$69.29\pm16.24$&$71.27\pm16.76$& $74.34\pm15.81$&$78.32\pm11.71$&$86.65\pm10.90$&$75.93\pm4.56$\\
    \bottomrule
  \end{tabular}
  }
\end{table*}

\subsection{Nash Equilibrium}
\label{Nash Equilibrium:label}
In this section, we demonstrate that \algname{} is a Nash equilibrium for the proposed setting under some assumptions. First, we state Theorem \ref{semi_is_nash_label}, showing that when $\alpha$ of all neurons approaches zero, the best response of a neuron, given that rest of the neurons use \algname{}, approaches the response it has in the \algname{} algorithm under some assumptions. We state these theorems for the MLP network case in Theorem \ref{semi_is_nash_label} and for Convolutional neural network (CNN) in Theorem \ref{semi_is_nash_label_CNN}. Then we show how these theorems are related to Nash Equilibrium in Theorem~\ref{convergence_theorem}. 

To this end, we define ${b^{[l]}_i}^*$ as the best response of neuron $i$ in layer $l$, given that the rest of the neurons use the \algname{} algorithm to update their weights. Additionally, $b^{[l]}_i$ refers to the response that neuron $i$ would have if it employed the \algname{} algorithm. Here are the theorems mentioned.

\begin{theorem}
\label{semi_is_nash_label}
Considering an MLP with neurons using  a piecewise linear activation, if $\forall_{l,j}: \alpha^{[l]}_j=\alpha$, and all neurons in layers after the $i$-th layer employ \algname{}, $0 < \normx{{r^{[l]}_{: i}}^\top.a^{[l-1]}}$, $g^{[l+1]}$ is independent of neuron $i$'s response at iteration $\tau-1$, and $\forall_{l,j,s}: z^{[l]}_{s,j}\notin f^{[l]} \text{breakpoints}$ then:\\
\begin{small}1)~$
\lim_{\alpha \to 0^+}\text{normalize}({{b^{[l]}_i}^*(\tau-1)})=\text{normalize}({b^{[l]}_i(\tau-1)})
$,\\
2)~$
\lim_{\alpha \to 0^+} \frac{\normx{{b^{[l]}_i}^*(\tau-1)}}{\normx{b^{[l]}_i(\tau-1)}} = 1 
$.\\
\end{small}
\normalsize
Where \textit{normalize} refers to $\ell_2$ vector normalization operator. 
\end{theorem}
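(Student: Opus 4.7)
The plan is to rewrite the statement as an explicit constrained optimization over $\Delta := b^{[l]}_i(\tau-1)$ with $\|\Delta\|_2\le\alpha$, expand it to leading order in $\alpha$, and match the maximizer against the \algname{} update from Algorithm~\ref{alg:semiGD}. The earliest iteration at which the response at $\tau-1$ can influence the utility $\|w^{[l+1]}_{:i}\|_2$ is $\tau+1$: $w^{[l+1]}(\tau)$ has already been committed by layer $l+1$'s simultaneous \algname{} update at $\tau-1$, and the perturbation in $w^{[l]}_i$ can only propagate through the next forward pass via $a^{[l]}_{:,i}(\tau)$. So the best response greedily maximizes $\|w^{[l+1]}_{:i}(\tau+1)\|_2$ with $w^{[l+1]}_{j,i}(\tau+1)=w^{[l+1]}_{j,i}(\tau)+\sum_k g^{[l+1]}_{k,j}(\tau)\,a^{[l]}_{k,i}(\tau)$.

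By piecewise linearity of $f^{[l]}$ and the non-breakpoint assumption on $z^{[l]}$, for small $\alpha$ one has $a^{[l]}_{k,i}(\tau)=\tilde a_k + M^{[l]}_{k,i}(\tau)\,a^{[l-1]}_k(\tau)\,\Delta^\top$, where $\tilde a_k$ is $a^{[l]}_{k,i}(\tau)$ evaluated at $\Delta=0$. Substituting and using the independence of $g^{[l+1]}(\tau)$ from $\Delta$, the squared utility expands as
\[
\|w^{[l+1]}_{:i}(\tau+1)\|_2^2 = \|u\|_2^2 + 2\,u^\top H\,\Delta^\top + \|H\Delta^\top\|_2^2,
\]
where $u_j:=w^{[l+1]}_{j,i}(\tau)+\sum_k g^{[l+1]}_{k,j}(\tau)\,\tilde a_k$ and $H_{j,q}:=\sum_k g^{[l+1]}_{k,j}(\tau)\,M^{[l]}_{k,i}(\tau)\,a^{[l-1]}_{k,q}(\tau)$; only the last two terms depend on $\Delta$.

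Next I account for the $\alpha$-scaling. Every \algname{} update has $\ell_2$-norm exactly $\alpha$, so $w^{[l+1]}(\tau)=w^{[l+1]}(\tau-1)+O(\alpha)$; because $c^{[l+1]}_j(\tau)=\alpha/\|{r^{[l+1]}_{:j}}^\top a^{[l]}(\tau)\|_2$, also $g^{[l+1]}(\tau)=O(\alpha)$, whence $H=O(\alpha)$, $u^\top H=O(\alpha)$, and the quadratic coefficient gives a contribution $O(\alpha^4)$ that is negligible compared to the linear $O(\alpha^2)$ part. Hence the maximizer saturates the constraint and satisfies ${b^{[l]}_i}^*(\tau-1)=\alpha\,u^\top H/\|u^\top H\|_2+O(\alpha^2)$. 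Using continuity of $u\to w^{[l+1]}_{:i}(\tau-1)$, $M^{[l]}(\tau)\to M^{[l]}(\tau-1)$ (by the no-breakpoint hypothesis), and $a^{[l-1]}(\tau)\to a^{[l-1]}(\tau-1)$ as $\alpha\to 0^+$, a direct rearrangement identifies the limit as $(u^\top H)_q\to\bigl(r^{[l]}_{:i}(\tau-1)^\top a^{[l-1]}(\tau-1)\bigr)_q$ via~\eqref{hiddenLayer_R}. This coincides with the \algname{} response $b^{[l]}_i(\tau-1)=c^{[l]}_i(\tau-1)\,r^{[l]}_{:i}(\tau-1)^\top a^{[l-1]}(\tau-1)$ up to the scalar $c^{[l]}_i(\tau-1)$; the positivity assumption $\|r^{[l]}_{:i}^\top a^{[l-1]}\|_2>0$ ensures a well-defined direction, proving claim~(1). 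For claim~(2), $\|b^{[l]}_i(\tau-1)\|_2=\alpha$ exactly (by the choice of $c^{[l]}_i$) and $\|{b^{[l]}_i}^*(\tau-1)\|_2=\alpha+O(\alpha^2)$ by the same saturation, so the ratio tends to $1$.

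The main obstacle is the $\alpha$-bookkeeping across the recursive normalization: because $g^{[l+1]}(\tau)$ itself carries a factor of $\alpha$ through $c^{[l+1]}$, one must verify that both $H/\alpha$ and $u^\top H/\alpha$ have well-defined non-zero limits, which is precisely where the positivity of $\|{r^{[l]}_{:i}}^\top a^{[l-1]}\|_2$ and the local constancy of $M^{[l]}$ (from the non-breakpoint condition) are essential.
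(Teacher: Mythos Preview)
Your proposal follows essentially the same route as the paper: expand $\|w^{[l+1]}_{:i}(\tau+1)\|_2^2$ via the \algname{} update rule, linearize $a^{[l]}_{:,i}(\tau)$ using piecewise linearity and the no-breakpoint hypothesis, exploit $g^{[l+1]}=O(\alpha)$ to discard the $O(\alpha^4)$ quadratic term in favor of the $O(\alpha^2)$ linear one, and identify the surviving coefficient with ${r^{[l]}_{:i}}^\top a^{[l-1]}$. Your matrices $u,H$ correspond to the paper's decomposition via $z:={g^{[l+1]}}^\top\mathrm{diag}(M^{[l]}_{:i})a^{[l-1]}$, and your continuity step matches Lemma~\ref{changes_are_small_lemma}.

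Two points deserve mention. First, your argument is written only for hidden layers: the utility $\|w^{[l+1]}_{:i}\|$ and the claim that $\tau+1$ is the earliest affected iteration do not apply when $l=L$, where $U^{[L]}_i=-\ell$ and the response at $\tau-1$ already enters at $\tau$. The paper handles this case separately by a first-order Taylor expansion of $\ell$ in $w^{[L]}$, which directly yields $b^*_i\propto {r^{[L]}_{:i}}^\top a^{[L-1]}$; you should add this short paragraph. Second, your sentence ``because $c^{[l+1]}_j(\tau)=\alpha/\|{r^{[l+1]}_{:j}}^\top a^{[l]}(\tau)\|_2$, also $g^{[l+1]}(\tau)=O(\alpha)$'' is not a self-contained justification: for $l+1<L$ the denominator itself scales with $\alpha$ (since $r^{[l+1]}\propto\alpha$), so $c^{[l+1]}=O(1)$ rather than $O(\alpha)$, and the conclusion $g^{[l+1]}=O(\alpha)$ only follows from the full induction of Lemma~\ref{g_is_in_order_alpha}. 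You clearly recognize this in your closing remark about ``$\alpha$-bookkeeping across the recursive normalization,'' but the body of the argument should either invoke that lemma or spell out the induction.
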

Here, we discuss our assumptions and the roles they play in the proof:

The first assumption is $0 < \normx{{r^{[l]}_{:i}}^\top.a^{[l-1]}}$. This assumption is necessary since, if it does not hold, $c^{[l]}_i(\tau)$ would be undefined, as this term appears in the numerator of Equation \ref{general_c}.

The second assumption is ``$g^{[l+1]^\top}.M^{[l]}_i.a^{[l-1]}$ is independent of the response at iteration $\tau-1$''. This assumption is reasonable since $a^{[l-1]}$ is entirely independent, given that a neuron's action will only affect $a$ for subsequent layers. On the other hand, $g^{[l+1]}$ is influenced by all neurons in layer $l$, making the effect of the neuron in this term marginal. Considering $M^{[l]}$, since we are interested in small $\alpha$ values, it cannot change within a single iteration becuase of the third assumption. 

In the case of CNNs, we consider each filter in every layer as an agent. Similar to MLPs, these agents aim to maximize the $\ell_2$ norm of their output connections. However, in the CNN scenario, each filter has multiple connections to the filters in the subsequent layer. The limitation on weight updates is also imposed by the $\ell_2$ norm of changes applied to the input connections of the filter. Notations and the \algname{} algorithm for CNNs are presented in Appendix \ref{Notation:Appendix}, \ref{Algorithms:Appendix}. To establish the Nash equilibrium property of \algname{} as $\alpha$ approaches zero in the CNN setting, we present the following theorem, which serves as the CNN counterpart to Theorem \ref{semi_is_nash_label}.

\begin{theorem}
\label{semi_is_nash_label_CNN}
Consider a CNN with  piecewise linear activation functions. Where all $\alpha^{[l]}_j$ are equal to $\alpha$, and all filters in layers after the $i$-th layer employ \algname{}:\\
1) $\exists c,j: \sum_{s=1}^{m}{\sum_{p=1}^{D}r^{[l]}_{s,i,p}a^{[l-1]}_{s,c,p-j}} \neq 0$,\\
2) $g^{[l+1]}$ is independent of filter $i$'s response at iteration $\tau-1$,\\ 
3) $\forall_{s, j}: z^{[l]}_{s,i,j}\notin f^{[l]} \text{breakpoints}$,
then:\\
\begin{small}
1)~$
\lim_{\alpha \to 0^+} \text{normalize}({{b^{[l]}_i}^*(\tau-1)}) = \text{normalize}({b^{[l]}_i(\tau-1)})
$,\\
2)~$
\lim_{\alpha \to 0^+} \frac{\normx{{b^{[l]}_i}^*(\tau-1)}}{\normx{b^{[l]}_i(\tau-1)}} = 1$.\\
\end{small}
\normalsize
Here,  \textit{normalize} signifies an operation that scales the matrix such that its Frobenius norm becomes equivalent to 1.
\end{theorem}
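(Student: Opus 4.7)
The plan is to mirror the argument for Theorem~\ref{semi_is_nash_label} while adapting it to the convolutional setting, where the structural novelty is that each filter participates in many spatial positions through weight sharing. First, I would let $\Delta \coloneqq w^{[l]}_i(\tau) - w^{[l]}_i(\tau-1)$ denote filter $i$'s action (a tensor of filter-weight updates), constrained by $\normx{\Delta} \leq \alpha$, and express filter $i$'s utility at iteration $\tau$ as a function of $\Delta$ alone. Using Assumption~(2) that $g^{[l+1]}$ is independent of $\Delta$ together with the \algname{} update rule (Eq.~\ref{eq:update_rule}) applied to every other filter, the only dependence of this utility on $\Delta$ flows through $a^{[l]}$, which in turn depends on $\Delta$ via the pre-activations $z^{[l]}$.

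Next, I would invoke Assumption~(3) (no pre-activation sits on a breakpoint of $f^{[l]}$) to argue that, for all sufficiently small $\alpha$, $f^{[l]}$ behaves as a fixed affine map at every site $z^{[l]}_{s,i,j}$ throughout the $\alpha$-ball of actions. Consequently $a^{[l]}$ becomes an affine function of $\Delta$, so the squared utility admits a first-order Taylor expansion of the form $U_0^2 + 2\,\fdot{\nabla}{\Delta} + O(\normx{\Delta}^2)$, where $U_0$ is the utility at $\Delta=0$. A direct computation that tracks the convolutional indices identifies $\nabla$ as a positive scalar multiple of the \algname{} update direction, whose $(c,j)$-entry equals $\sum_{s,p} r^{[l]}_{s,i,p}\,a^{[l-1]}_{s,c,p-j}$; Assumption~(1) then guarantees $\nabla \neq 0$, so the leading-order problem is non-degenerate.

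Then the constrained maximization reduces, as $\alpha \to 0^+$, to maximizing a linear functional of $\Delta$ over the Frobenius ball $\normx{\Delta} \leq \alpha$. By Cauchy--Schwarz the unique maximizer is $\Delta^{\star} = \alpha\,\nabla / \normx{\nabla}$, which saturates the constraint, so ${b^{[l]}_i}^*(\tau-1)$ aligns with $\nabla$. On the \algname{} side, a short calculation using Eq.~\ref{general_c} shows that $b^{[l]}_i(\tau-1)$ also has Frobenius norm exactly $\alpha$ and its direction is the normalization of the very same $\nabla$. Combining these two facts yields both part~(1) (alignment of the normalized actions) and part~(2) (magnitude ratio tending to $1$) in the limit $\alpha \to 0^+$.

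The main obstacle I anticipate is the index bookkeeping for the convolutional structure: the ``output weights'' that target filter $i$ in layer $l+1$ form a tensor rather than a vector, and one must carefully verify that the contraction pattern appearing in $\nabla$ is precisely the tensor whose $(c,j)$-entry is $\sum_{s,p} r^{[l]}_{s,i,p}\,a^{[l-1]}_{s,c,p-j}$. Beyond this index calculus, one must also confirm that the $O(\normx{\Delta}^2)$ remainder is uniform in $\alpha$, so that its contribution vanishes relative to the linear term; this is standard once the affine structure has been pinned down by Assumption~(3). With the tensor accounting in place, the remainder of the argument is essentially identical to the MLP case treated in Theorem~\ref{semi_is_nash_label}.
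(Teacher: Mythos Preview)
Your plan mirrors the paper's proof almost exactly: expand the squared utility at iteration $\tau+1$ using the \algname{} update of layer $l{+}1$, linearize $a^{[l]}$ in the action via Assumption~(3), split into constant/linear/quadratic pieces in $\Delta$, and show the maximizer over the ball aligns with the linear coefficient, which after the convolutional index manipulation is $\rconv{r^{[l]}_{:i:}}{a^{[l-1]}}$. The ``index bookkeeping'' you flag as the main obstacle is precisely the content of the paper's Lemmas~\ref{f_dot_rconv_to_deconv} and~\ref{f_dot_conv_to_rconv2}, and your use of Assumption~(3) to freeze the slopes/offsets corresponds to Lemma~\ref{changes_are_small_lemma:CNN}.

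There is, however, one real gap in your order-of-magnitude accounting. You assert that the remainder is $O(\normx{\Delta}^2)$ ``uniformly in $\alpha$'' and that this suffices for it to vanish relative to $\fdot{\nabla}{\Delta}$. But $\nabla$ itself scales with $\alpha$: its entries are built from $r^{[l]}$, and by the \algname{} recursion $r^{[l]}=\deconv{g^{[l+1]}}{w^{[l+1]}}\odot M^{[l]}$ with $g^{[l+1]}\propto\alpha$. Hence over $\normx{\Delta}\le\alpha$ the linear term has maximum $\alpha\normx{\nabla}=\Theta(\alpha^2)$, the \emph{same} order as a generic $O(\normx{\Delta}^2)$ remainder with a merely bounded coefficient; uniformity alone does not separate them. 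What actually makes the higher-order pieces negligible is that they contain $g^{[l+1]}$ once more: the purely quadratic term carries $g^{[l+1]}$ twice and is therefore $O(\alpha^4)$, while the subdominant linear piece arising from expanding $\normf{\rconv{g^{[l+1]}}{a^{[l]}_{:i:}}}^2$ is $O(\alpha^3)$. The paper isolates exactly this scaling as Lemma~\ref{g_is_in_order_alpha:CNN} and uses it explicitly to compare the three addends; you need to invoke the same fact for your comparison to go through.
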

The role of the assumptions in this theorem parallels that of Theorem \ref{semi_is_nash_label}. 
In the following, we only consider the MLP case, and CNN discussion is left for the Appendix. 

Let us now delve into the reasons why the responses of all neurons tend to converge to the \algname{} response when the value of $\alpha$ approaches zero and all the assumptions stated in Theorem \ref{semi_is_nash_label} are satisfied. As Theorem \ref{semi_is_nash_label} is solely conditioned on the algorithm employed by the next layer neurons, we can deduce that the responses of the neurons in the last layer will converge to \algname{}, irrespective of the algorithms employed by the other neurons. Similarly, the responses of the neurons in the second-to-last layer will converge to \algname{} only if the last layer neurons employ \algname{}. However, based on the previously mentioned statement, we can prove that the responses of the last layer neurons unconditionally converge to \algname{} as $\alpha$ approaches zero. Consequently, the responses of the neurons in the next-to-last layer will also converge to the \algname{} response. We can iteratively apply a similar reasoning for each layer, proceeding backward layer by layer, to establish the convergence of neuron responses. This is formally demonstrated in Theorem \ref{convergence_theorem}.

\begin{theorem}
\label{convergence_theorem}
Consider an MLP with neurons using a piecewise linear activation functions. Assume that for all $i$ and $l$ the following conditions hold:\\
1) $0 < \normx{{r^{[l]}_{:i}}^\top.a^{[l-1]}}$,\\
2) $g^{[l+1]}$ is independent of neuron $i$'s response at iteration $\tau-1$,\\
3) $\forall_{s}: z^{[l]}_{s,i}\notin f^{[l]} \text{breakpoints}$.
Then, the responses of all neurons converge to the \algname{} response as follows:\\
\begin{small}
1) $\lim_{\alpha \to 0^+} \text{normalize}({{b^{[l]}_i}^*(\tau-1)}) = \text{normalize}({b^{[l]}_i(\tau-1)})$,\\
2) $\lim_{\alpha \to 0^+} \frac{\normx{{b^{[l]}_i}^*(\tau-1)}}{\normx{b^{[l]}_i(\tau-1)}} = 1$,\\
\end{small}
for every neuron $i$ in layer $l$, and every layer $l$ in the MLP, where \textit{normalize} refers to $\ell_2$ vector normalization operator. 
\end{theorem}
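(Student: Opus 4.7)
The plan is to perform downward induction on the layer index, using Theorem \ref{semi_is_nash_label} as a black box at each step. The guiding idea, as suggested in the paragraph preceding the theorem statement, is that the only extra premise in Theorem \ref{semi_is_nash_label}---that all neurons in layers after layer $l$ already employ \algname{}---is vacuous for $l=L$ and can be bootstrapped from deeper layers to shallower ones by successively promoting the conclusion established for layer $l+1$ into an ``employs \algname{}'' hypothesis for layer $l$.

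For the base case $l=L$, fix any neuron $i$ in the output layer. The condition ``all neurons in layers after layer $L$ employ \algname{}'' is vacuously true, and the assumed conditions (1)--(3) of the present theorem supply exactly the remaining hypotheses of Theorem \ref{semi_is_nash_label}. Its conclusion is then precisely items (1) and (2) of the present theorem for that neuron.

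For the inductive step, assume the convergence conclusions hold for every neuron in layers $l+1, \ldots, L$. Fix any neuron $i$ in layer $l$. In the $\alpha \to 0^+$ regime, the induction hypothesis says that every neuron in the later layers is executing its \algname{} response up to the stated normalization, so they may be regarded as employing \algname{}. Combined with conditions (1)--(3), this gives all hypotheses of Theorem \ref{semi_is_nash_label} for neuron $i$, whose conclusion is exactly what is required. Iterating from $l = L-1$ down to $l = 1$ completes the induction.

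The main obstacle is the subtle gap in the inductive step: Theorem \ref{semi_is_nash_label} formally demands that later-layer neurons \emph{employ} \algname{}, while the induction hypothesis only yields convergence of their best responses to \algname{} as $\alpha \to 0^+$. Justifying the exchange of limits requires a continuity argument---one must check that the ingredients feeding into neuron $i$'s best-response computation, namely $a^{[l-1]}$, $r^{[l+1]}$, $g^{[l+1]}$, and $M^{[l]}$, depend continuously on the later neurons' actions, so that $o(1)$ deviations there translate into $o(1)$ deviations in the quantities driving Theorem \ref{semi_is_nash_label}'s estimates. Condition (3), that pre-activations stay away from the activation breakpoints, is essential for this step, since it ensures $M^{[l]}$ is locally constant and therefore trivially continuous. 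A cleaner route is to reinterpret the statement as a Nash-equilibrium verification: fix \emph{every} neuron to play \algname{} exactly; then the ``employs \algname{}'' hypothesis holds by construction, and the result follows by applying Theorem \ref{semi_is_nash_label} once per neuron, sidestepping induction and the continuity issue altogether.
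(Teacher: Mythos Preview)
Your downward induction on the layer index, invoking Theorem~\ref{semi_is_nash_label} at each step with the base case $l=L$ vacuous, is exactly the paper's proof. The paper's argument is in fact just your first three paragraphs; it does not raise or resolve the ``employs \algname{}'' versus ``converges to \algname{}'' gap you flag in your fourth paragraph, so your caution there already goes beyond what the authors provide.
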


Theorem~\ref{convergence_theorem} implies that \algname{} converges to the best response of each neuron as $\alpha$ approaches zero. More importantly, this theorem suggests that if neurons utilize a first-order approximation for their utility for larger values of $\alpha$ to update their weights, \algname{} represents their response.

\section{Experiment}
\label{experiment_section}
The primary objective of this section is to evaluate and compare the performance of the \algname{} algorithm with error-backpropagation in the context of deep learning. To achieve this, we present experimental results based on three popular benchmark datasets: MNIST \cite{mnist}, Split-MNIST \cite{zenke2017continual}, and CIFAR-10 \cite{cifar}. Moreover, the details of datasets, data pre-processing, and models' architecture are provided in  Appendix \ref{Experiments:Appendix}.

\begin{figure*}[t]
    \centering
    \subfigure[]{\includegraphics[width=0.534\textwidth]{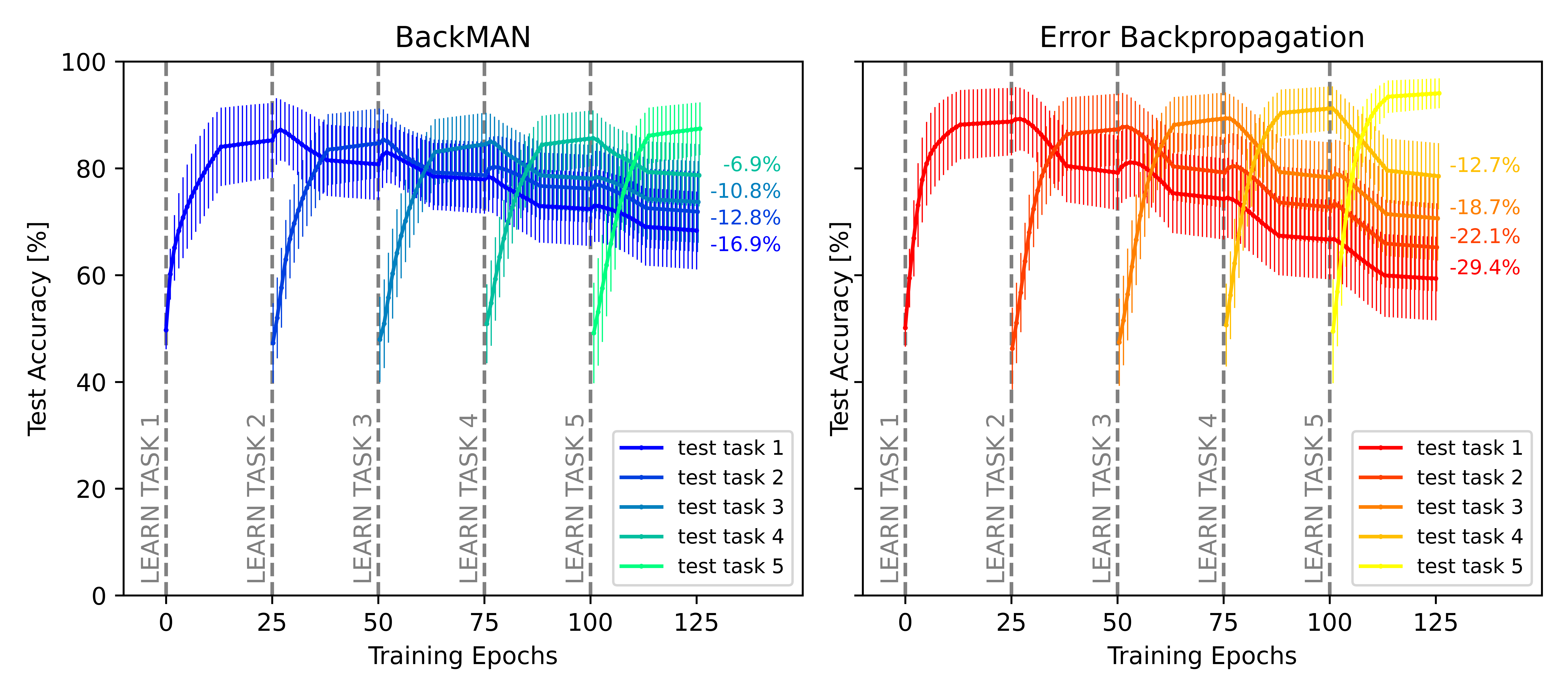}} 
    \subfigure[]{\includegraphics[width=0.273\textwidth]{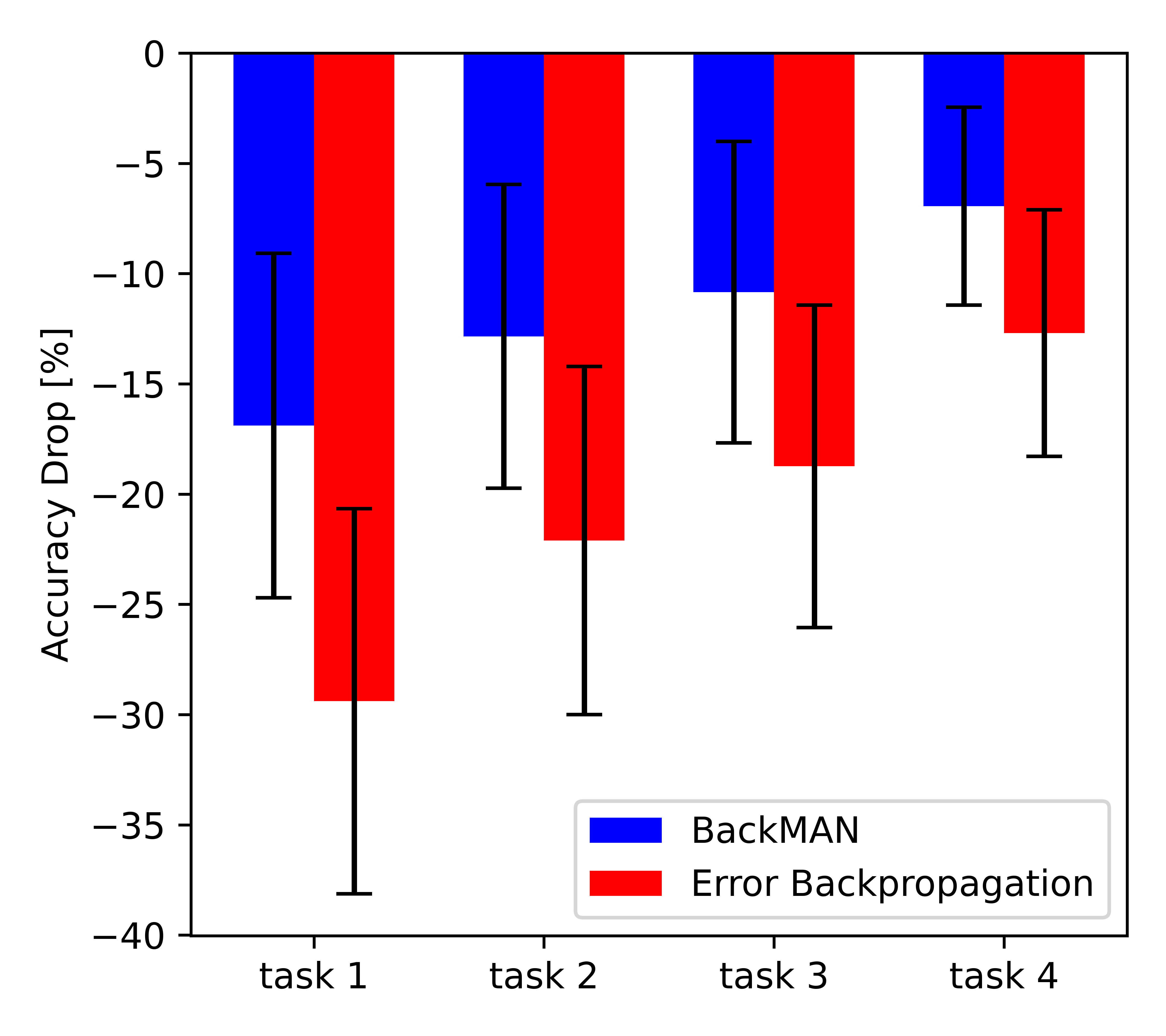}} 
    \caption{
    (a) Task-specific accuracies during training for both error-backpropagation and \algname{}. Numbers on the right side of the plots indicate the accuracy difference between the end of training and the completion of the respective task. (b) Accuracy difference between the end of the overall training and the end of the task of interest's training.}
    \label{fig:forgetting}
\end{figure*}

\subsection{Classification}
\label{experiment_classification}

To demonstrate that the \algname{} algorithm can achieve results comparable to error-backpropagation, we tested both algorithms on the MNIST, CIFAR-10, and CIFAR-100 datasets.
According to the results in Table~\ref{ClassificationTabel}, \algname{} can attain almost the same accuracy on the MNIST dataset and CIFAR-10 with ResNet-18 and only lower accuracy on the CIFAR-10 (LeNet-5) and CIFAR-100 datasets while maintaining significantly higher accuracy compared to other baselines. 

For the MNIST experiment, an MLP model is used. The model consists of 4 layers consist of 2000 neurons connected to each other with a ReLU activation function and the last layer has 10 neurons and uses a softmax activation to produce probabilities. For the CIFAR-10 (Le-Net5) experiment, a Le-Net5 model is used with two sets of convolutional and max pooling layers, followed by two fully connected layers and an output layer that uses Softmax. In ResNet-18 experiments, the standard model used but the initial convolution layer modified to compensate the low resulution of CIFAR-10 and CIFAR-100 datasets.

For training the models, 
we conducted a grid search with batch sizes of 64, 128, 256, and 512, as well as learning rates/$\alpha$ of 0.002, 0.01, 0.05, and 0.25 to identify the optimal hyperparameters. 
Next, the models are trained using the identified hyper-parameters for 100 epochs with 5 different seeds. 
We evaluated the accuracy of each model using the validation set after every epoch, selecting the best-performing model, which might not necessarily be the model after the final epoch.

In Table \ref{ClassificationTabel}, we also present the results of DropConnect \cite{wan2013} and Adaptive DropConnect, as reported in \cite{AuctionNeuron}. However, we acknowledge that comparing these methods may not be fair due to differing assumptions regarding neurons' action space, neural observations, and underlying mechanisms.

\subsection{Catastrophic Forgetting}

Catastrophic forgetting, a widely recognized phenomenon in the machine learning domain, occurs when models inadvertently lose previously acquired knowledge upon training for a new task \cite{forgetting1, forgetting2}. In contrast, humans and biological neural networks generally do not exhibit this behavior. To demonstrate that our model emulates properties expected from a biological system, which has been represented by multi-agent systems in various studies \cite{ott2020giving, balduzzi2014kickback, Lewis2014,wang2015reinforcement}, we assess our method in a catastrophic forgetting benchmark with the Split-MNIST dataset. As illustrated in Table \ref{forgetting tabel}, our method outperforms error-backpropagation by achieving a higher average accuracy.

Fig. \ref{fig:forgetting}(a) depicts task-specific accuracies, 
suggesting that although \algname{} may attain marginally lower 
accuracy on individual tasks, it maintains its performance when 
trained on subsequent tasks, culminating in 
superior average accuracy. In Fig. \ref{fig:forgetting}(b), we show the accuracy decline between the final training epoch and the last epoch in that models are trained on the task of interest. It is apparent that \algname{} has half the accuracy reduction compared to error-backpropagation across all tasks.

For this experiment, we adhered to the same hyperparameter grid search as delineated in Section \ref{experiment_classification}. The models were evaluated based on their final average accuracy across all five tasks. Each model underwent 25 epochs of training for every task. To augment our model's stability, we employed a dynamic selection of the $\alpha$ value, ensuring that neurons within each layer shared an equal value of $\alpha$ to preserve fairness in their competitive interactions. The $\alpha$ value for each layer was determined such that the Frobenius norm of the $g$ vector is equal to the Frobenius norm of $\delta$ that would be produced if that layer was trained with gradient descent. 

\subsection{Depth Analysis}
\begin{figure}[ht]
    \centering
\label{fig:depth}
\includegraphics[width=0.38\textwidth]{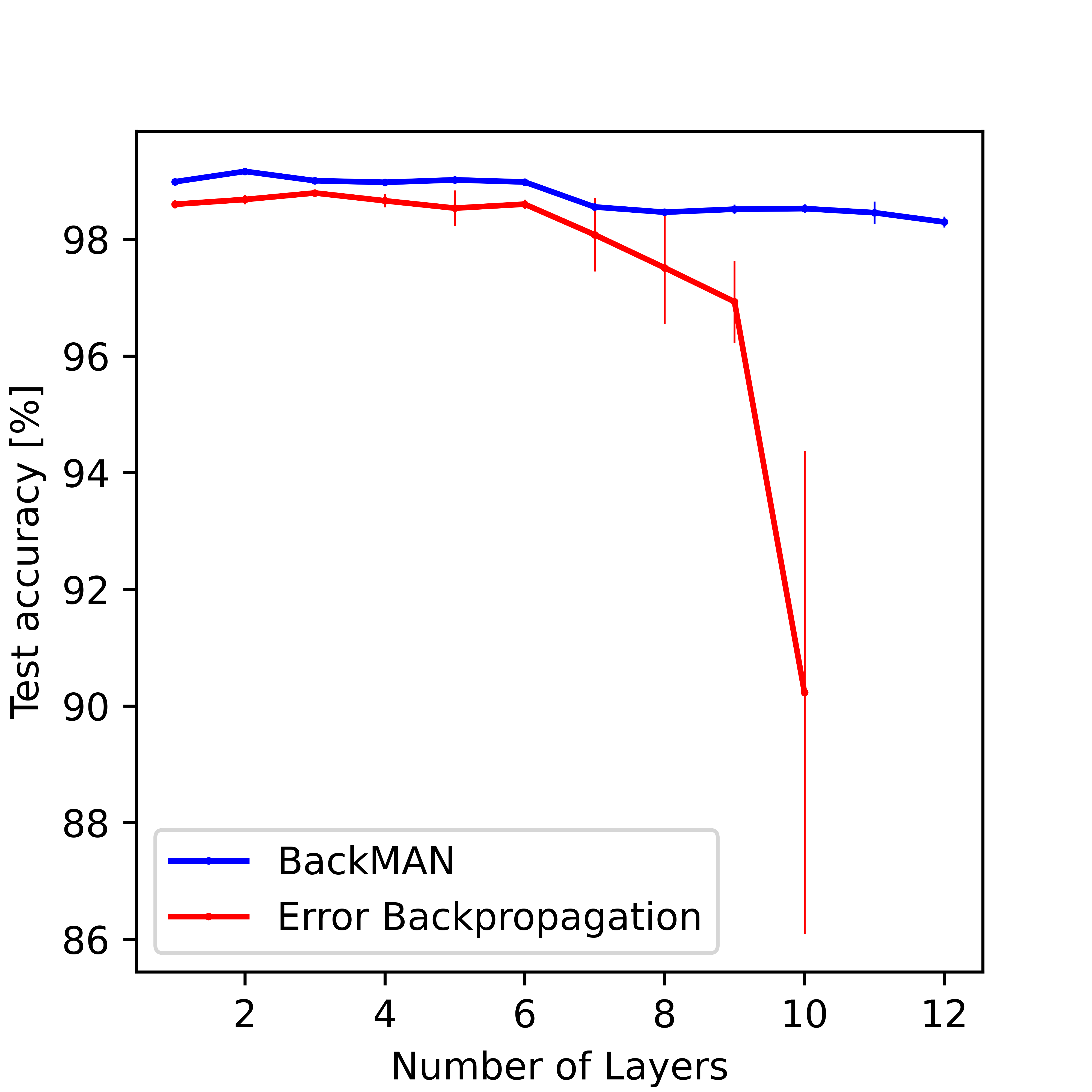}
\caption{Performance comparison of the \algname{} algorithm and Error-Backpropagation across varying network depths, with each layer consisting of 2000 neurons on MNIST dataset. Mean and standard deviations are calculated using five seeds. }
\end{figure}
In this Section, we endeavor to assess the performance of the \algname{} algorithm across a diverse range of neural network depths. The evaluation is conducted on MLPs with hidden layer counts ranging from one to twelve, each consisting of $2000$ neurons. The dataset employed for this analysis is MNIST dataset. Our findings, as depicted in Fig. \ref{fig:depth}, reveal that the \algname{} algorithm surpasses the performance of error-backpropagation in network configurations with more than six layers. This superior performance can potentially be attributed to the normalization-like effects that \algname{} exerts on the signals propagated by neurons to preceding layers. We furture investigate this normalization-like effect of \algname{} in Appendix.~\ref{appendix:DampingEffect}. Additional results for the CIFAR-100 dataset and ResNet50 model are also presented in Appendix.~\ref{appendix:depthanalysis}. 

For this experiment, a batch size of 1024 and a grid search to find the optimal learning rate is utilized. The search is conducted over a set of candidate learning rates/$\alpha$, namely $0.0004$, $0.002$, $0.01$, $0.05$, and $0.25$. Next, models are trained using optimal hyperparameters with five different seeds. The performance of the best model, identified using a validation set, of each run is subsequently evaluated and averaged across the test set.

\subsection{Performance}

A prevalent drawback of multi-agent systems is their inability to leverage the acceleration offered by GPUs or TPUs.
The primary reason for this limitation is that many of these algorithms do not rely on gradients, unlike conventional deep-learning training approaches. Libraries designed for deep learning are typically constructed in a manner that allows efficient gradient calculations. Therefore these multi-agent systems can not utilize these libraries. However, we have implemented our method in such a way that it can be executed on a GPU. To achieve this, we used the PyTorch library \cite{PyTorch} and applied implementation techniques that enabled the training of our network, despite not using gradients and the library being designed for gradient-based model training. We provide further details in Appendix \ref{ImplementationAppendix}.

To compare the runtime of the proposed training algorithm, we performed 10,000 forward and backward passes with 
a batch size of 1024 using a blank image to eliminate the influence of data loaders. In all of our experiments, including this section, we employed a single P100 GPU. 
Table.~\ref{performanceTabel} depicts the results, revealing our method is roughly two times slower than EBP.

\begin{table}
\caption{Runtimes for 10,000 forward and backward passes of batches with 1024 blank images.}
  \label{performanceTabel}
    \centering
  \begin{tabular}{lccl}
    \toprule
    Model     & MLP     & LeNet-5 \\
    \midrule
    EBP & 117s & 61s \\
    \algname{} & 263s & 129s \\
    \bottomrule
  \end{tabular}
\end{table}
\section{Conclusion}

The motivation for utilizing multi-agent neural networks is compelling; however, they often lack scalability, particularly when compared to the remarkable capabilities of error-backpropagation for training deep neural networks. In this work, we propose a local objective for neurons that, if pursued greedily by the neurons, the network emulates behaviors similar to error-backpropagation. We first establish that if neurons optimally respond in the suggested setting, their behavior converges to that of our proposed algorithm. Subsequently, we demonstrate that the algorithm can achieve impressive results, particularly in a catastrophic forgetting benchmark and when considering deep MLPs.

\section*{Acknowledgments}
The authors would like to thank Maral Jabbari Shiviari, John Lazarsfeld, and Mher Safaryan for their advice on aspects of the project.

\bibliography{arXiv}
\bibliographystyle{icml2023}
\newpage
\onecolumn
\newpage
\appendix

\addcontentsline{toc}{section}{Appendix} 

\section{Method Theoretical parts}
\label{theoreticalAppendix}

\subsection{Notation for Convolutional Neural Networks (CNNs)}
\label{Notation:Appendix}

We focus on CNNs operating on one-dimensional (1D) data, such as time series, to streamline our discussion. However, it is important to note that our proofs and discussions can be generalized to two-dimensional (2D) data, such as images. In this case, the input is represented as $X\in\mathds{R}^{m\times n_x \times d_0}$.

Unlike MLPs, CNNs do not utilize the concept of neurons but rather employ filters, which we treat as self-interested agents. For a given CNN, we denote the filters as $w^{[l]}\in\mathds{R}^{n_l\times n_{l-1}\times k_l}$, where $k_l$ signifies the kernel size. The output from layer $l$ is defined as $a^{[l]}\in\mathds{R}^{m\times n_l\times d_l}$, where $d_l$ denotes the dimension of each filter's output. Hence, we can express

\begin{equation}
\label{forward_pass:CNN}
a^{[l]}_{s,i,p}=f^{[l]}(\sum_{j=1}^{n_{l-1}} \sum_{e=1}^{k_l} a^{[l-1]}_{s,j,p-e+k_l}.w^{[l]}_{i,j,e}).
\end{equation}

Subsequently, this implies that $d_l=d_{l-1}-k_l+1$. To make our equations more compact, we introduce the \textbf{Conv} notation, denoting the convolution operation. Thus,

\begin{equation}
\label{forward_pass:CNN:short}
a^{[l]}=f^{[l]}(\conv{a^{[l-1]}}{w^{[l]}}).
\end{equation}
The specifics of this operation are explicated comprehensively in Table~\ref{CNN_notation:table}.

Furthermore, the objective function for filter $i$ in layer $l$ is defined as follows:
\begin{equation}
\label{utility_equation:CNN}
    U^{[l]}_{i}(\tau)=\normf{w^{[l+1]}_{:i:}}=\sqrt{\sum_{j,k} (w^{[l+1]}_{j,i,k}(\tau))^2}.
\end{equation}
In addition, the update limit in Eq.~\eqref{updateAssumption} modifies to:
\begin{equation}
    \normf{w^{[l]}_{i::}(\tau)-w^{[l]}_{i::}(\tau+1)} \leq \alpha^{[l]}_i(\tau).
    \label{updateAssumption:CNN}
\end{equation}

\subsection{\algname{} Algorithm for CNN}
\label{Algorithms:Appendix}
In this subsection, we provide an exposition of \algname{} algorithm for Convolutional Neural Networks (CNNs) utilizing a piecewise linear function as the activation function between layers.

Given that we are dealing with one-dimensional data, variables $g$, $r$, $a$, and $w$ are not two-dimensional matrices. Instead, they are three-dimensional.

So weight update is performed as follows:
\begin{equation}\label{eq:update_rule:CNN}
    w^{[l]}_{i,j,e}(\tau+1) = w^{[l]}_{i,j,e} + \sum_{s,p}^{m,d_l-k_l+1} g^{[l]}_{s,i,p}.a^{[l-1]}_{s,j,p-e+k_l}.
\end{equation} 
Considering the frequency of this operator's use, we define it as the operator $\psi$ to simplify this notation. The detailed notation is stipulated in Table~\ref{CNN_notation:table}. This operator essentially performs the calculations mentioned above for all weights. 

Using $\psi$ we can rewrite Eq.~\eqref{eq:update_rule:CNN} as:

\begin{equation}\label{eq:update_rule:CNNshort}
    w^{[l]}(\tau+1) = w^{[l]} +\rconv{g^{[l]}}{a^{[l-1]}}.
\end{equation} 

Furthermore, the $g^{[l]} \in \mathds{R}^{m \times n_l \times d_l}$ variable is defined as follows:
\begin{equation}
\label{genral_g:CNN}
    g^{[l]}_{s,i,p} \coloneqq  r^{[l]}_{s,i,p}.c^{[l]}_{i},
\end{equation}
where $c^{[l]} \in \mathds{R}^{n_l}$ comes as the normalization factor that enforces \eqref{updateAssumption:CNN} in \eqref{eq:update_rule:CNN}. Based on the objective functions defined in subsection~\ref{Notation:Appendix},the $r^{[L]}$ variable for the last layer is calculated as follows:
\begin{equation}
\label{lastLayer_R:CNN}
    r^{[L]}  \coloneqq  -\frac{\partial \ell}{\partial a^{[L]} }.
\end{equation}
For other layers, $r^{[l]}$ is calculated based on $g^{[l+1]}$ as follows:
\begin{equation}
\label{hiddenLayer_R:CNN:indentity}
    \forall_{s\leq m, i \leq n_l, p \leq d_l}: r^{[l]}_{s,i,p}=\sum_{j,e}^{n_{l+1}, k_l} g^{[l+1]}_{s,j,p+e-k_l}.w^{[l+1]}_{j,i,e}.{f^{[l]}}^{'}(a^{[l]}_{s,i,p}).
\end{equation}
If we use a cell in the summation that does not exist, e.g., $g^{[l+1]}_{1,1,-1}$, that cell value is assumed to be zero. To simplify this notation, we use the notation $\text{ConvT}$, which calculates this value for the whole layer. In the literature, this function is usually called "convolution transposed." The exact definition is provided in Table~\ref{CNN_notation:table}.

Using this definition we can simplify Eq.~\eqref{hiddenLayer_R:CNN:indentity} as:
\begin{equation}
\label{hiddenLayer_R:CNN}
r^{[l]}_{s,i,p}=\deconv{g^{[l+1]}}{w^{[l+1]}} \odot {f^{[l]}}^{'}(a^{[l]})
\end{equation}
where ${f^{[l]}}^{'}(x)$ refers to derivative of ${f^{[l]}}(x)$.
Moreover, by substituting Eq.~\eqref{eq:update_rule:CNN} into Eq.~\eqref{updateAssumption:CNN}, we obtain:
\begin{equation}
\label{general_c:CNN}
    c^{[l]}_i \leq \frac{\alpha^{[l]}_i}{\sqrt{\sum_{j,e} ({\sum_{s,p} r^{[l]}_{s,i,p}.a^{[l-1]}_{s,j,p-e}})^2 }}=\frac{\alpha^{[l]}_i}{\normf{\rconv{r^{[l]}}{a^{[l-1]}}_{i}}}
\end{equation}

We define $c^{[l]}_i$ to set it to this upper bound.  Thus, the CNN algorithm is as follows:
\begin{algorithm}
\caption{One Iteration in \algname{} Algorithm for CNN}  \label{alg:semiGD:CNN}
\begin{algorithmic}[0]
\State $a^{[0]} \gets X$ \Comment{forward pass}
\For{l:=1}{L} 
    \State $a^{[l]} \gets f^{[l]}(\conv{a^{[l-1]}}{w^{[l]}})$
\EndFor
\\    
\State $r^{[L]} \gets -\frac{\partial \ell}{\partial a^{[L]}}$ \Comment{backward pass}
\\
\For {l:={L}}{1}
    \State $\forall_{i\leq {n_l}}: c^{[l]}_i \gets 
    \frac{\alpha^{[l]}_i}{\normf{\rconv{r^{[l]}}{a^{[l-1]}}_{i}}}$\\ 
    \State \ \ \ \ \ \ $\forall_{s,i,p}: g^{[l]}_{s,i,p} \gets r^{[l]}_{s,i,p}.c^{[l]}_i$ 
    \State \ \ \ \ \ \  $r^{[l-1]} \gets \deconv{g^{[l]}}{w^{[l]}} \odot {f^{[l]}}^{'}(a^{[l]})$ 
\EndFor
\\
\For{l:=1}{L} \Comment{weight update}
    \State $w^{[l]} \gets w^{[l]}+\rconv{g^{[l]}}{a^{[l-1]}}$
\EndFor
\end{algorithmic}
\end{algorithm}

The functional components that are integral to Convolutional Neural Networks (CNNs) are comprehensively collated in Table~\ref{CNN_notation:table}.
\begin{table}[h]
\caption{Notation for CNN-related functions.}
\centering
\begin{adjustbox}{scale=0.90}
\renewcommand{\arraystretch}{2}
\begin{tabular}{cccc}
\hline
Notation & Type & Definition\\
\hline
$\deconv{.}{.}$ & $\mathds{R}^{m\times n' \times d}\times\mathds{R}^{n'\times n \times k}\to\mathds{R}^{m\times n \times d+k-1}$ &  $\deconv{A}{B}_{s,i,p}=\sum_{j,e}^{n', k} A_{s,j,p+e-k}B_{j,i,e}$ \\
$\rconv{.}{.}$  & $\mathds{R}^{m\times n' \times d}\times\mathds{R}^{m\times n \times d+k-1}\to\mathds{R}^{n'\times n \times k}$ & $\rconv{A}{B}_{i,j,e}=\sum_{s,p}^{m,d} A_{s,i,p}B_{s,j,p-e+k}$\\
$\conv{.}{.}$   & $\mathds{R}^{m\times n' \times d}\times\mathds{R}^{n\times n' \times k}\to\mathds{R}^{m \times n \times d-k+1}$ & $\conv{A}{B}_{s,i,p}=\sum_{j,e}^{n',k} A_{s,j,p-e+k}B_{i,j,e}$
\\
$\langle.,.\rangle_F$ & $\mathds{R}^{a_1\times ... \times a_k} \times \mathds{R}^{a_1\times ... \times a_k}\to\mathds{R}$ & $\langle A,B\rangle_F = \sum_{e\in \text{all elements}}A_eB_e$ \\
$.\odot.$  & $\mathds{R}^{a_1\times ... \times a_k}\times \mathds{R}^{a_1\times ... \times a_k} \to\mathds{R}^{a_1\times ... \times a_k}$ & $(A\odot B)_{e\in \text{all elements} }=A_{e}B_{e}$  \\
\hline
\end{tabular}
\end{adjustbox}
\label{CNN_notation:table}
\end{table}

\subsection{Relation to Gradient Descent}
\label{relation to gradient descent:Appendix}
In this section, we aim to provide the proof for theorems in the section~\ref{Relation to gradient descent:label}. Those theorems aim to establish a link between the \algname{} algorithm and traditional gradient descent under certain conditions. To do so, we first introduce some notation. Let $\delta^{[l]}$ denote $\frac{\partial \ell}{\partial z^{[l]}}$, the partial derivative of the loss function $\ell$ with respect to the pre-activations $z^{[l]}$ at layer $l$. Before proving the Theorem, we try to prove, under the same condition as Theorem~\ref{semi_equal_GD_label}, $g^{[l]} = -\delta^{[l]}$ for all $l$.

\begin{lemma}
\label{g_equal_delta_lemma}
Assuming that $\forall_{i,l}: c^{[l]}_i = 1$ and all neurons use  a piecewise linear function as their activation function, the \algname{} algorithm results in $\forall_{l}: g^{[l]} = -\delta^{[l]}$.
\end{lemma}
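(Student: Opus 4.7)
The plan is to prove the lemma by \textbf{backward induction on the layer index} $l$, showing that $g^{[l]} = -\delta^{[l]}$ for every layer, where $\delta^{[l]} \coloneqq \frac{\partial \ell}{\partial z^{[l]}}$. The whole argument should essentially amount to recognizing that, once the normalization factors $c^{[l]}_i$ are all set to $1$, the recursive definition of $r$ and $g$ in \algname{} becomes formally identical to the error-backpropagation chain rule.

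For the base case $l = L$, I would simply invoke Eq.~\eqref{lastLayer_R}, which gives $r^{[L]} = -\frac{\partial \ell}{\partial z^{[L]}} = -\delta^{[L]}$, and then use Eq.~\eqref{genral_g} together with $c^{[L]}_i = 1$ to conclude $g^{[L]} = r^{[L]} \cdot \mathrm{diag}(c^{[L]}) = -\delta^{[L]}$. For the inductive step, assuming $g^{[l+1]} = -\delta^{[l+1]}$, I plug this into Eq.~\eqref{hiddenLayer_R} to get $r^{[l]} = (-\delta^{[l+1]} \cdot w^{[l+1]}) \odot M^{[l]}$, and then I need to match this against the standard chain-rule expansion $\delta^{[l]} = (\delta^{[l+1]} \cdot w^{[l+1]}) \odot f^{[l]'}(z^{[l]})$, which follows from $z^{[l+1]} = a^{[l]} \cdot {w^{[l+1]}}^\top$ and $a^{[l]} = f^{[l]}(z^{[l]})$. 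Then $g^{[l]} = r^{[l]} \cdot \mathrm{diag}(c^{[l]}) = r^{[l]} = -\delta^{[l]}$, closing the induction.

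The only place where care is needed, and what I would flag as the main (mild) subtlety, is the identification $M^{[l]} = f^{[l]'}(z^{[l]})$. This is exactly where the piecewise linear assumption on the activation is used: $f^{[l]'}$ is well-defined and constant on each linear piece, and its value at $z^{[l]}_{k,i}$ coincides with the slope $M^{[l]}_{k,i}$ defined in Section~\ref{notation} (e.g., $\mathds{1}(a^{[l]}_{k,i} > 0)$ for ReLU). I would state this explicitly, noting that non-differentiability only occurs on the measure-zero set of breakpoints, which we may ignore (or handle by a subgradient convention consistent with the definition of $M^{[l]}$). Once this identification is made, the induction is a direct syntactic match between the two recursions, and no further computation is required.
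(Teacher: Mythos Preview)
Your proposal is correct and follows essentially the same approach as the paper: backward induction on the layer index, with the base case at $l=L$ coming directly from Eq.~\eqref{lastLayer_R} and Eq.~\eqref{genral_g}, and the inductive step obtained by substituting the hypothesis into Eq.~\eqref{hiddenLayer_R} and using $c^{[l]}_i=1$. The only remark is that the ``subtlety'' you flag, namely $M^{[l]}=f^{[l]'}(z^{[l]})$, is in fact the paper's definition of $M^{[l]}$ in Section~\ref{notation}, so no additional justification is needed there.
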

\begin{proof}[Proof of Lemma~\ref{g_equal_delta_lemma}]
The proof is conducted using induction.
For the base case, considering the last layer $L$, we have from Eq.~\eqref{genral_g} and Eq.~\eqref{lastLayer_R}:
\begin{equation}
g^{[L]}=-\frac{\partial \ell}{\partial z^{[L]} }. diag(c^{[L]})
\end{equation}
By substituting $c^{[L]}_i=1$ from the lemma assumption, we get
\begin{equation}
    g^{[L]}=-\frac{\partial \ell}{\partial z^{[L]} } 
\end{equation}
which establishes the base case.
For the inductive step, by assuming:

\begin{equation}
g^{[l+1]} = -\delta^{[l+1]},
\end{equation}

We should prove:

\begin{equation}
g^{[l]} = -\delta^{[l]}.
\end{equation}

Using Eq.~\eqref{hiddenLayer_R} and Eq.~\eqref{genral_g}, we obtain 
\begin{equation}
g^{[l]}=((g^{[l+1]}.w^{[l+1]})\odot M^{[l]}).diag(c^{[l]})
\end{equation} 

Applying the lemma assumption $\forall_{i,l}: c^{[l]}_i = 1$, we get: 
\begin{equation}
\label{circ_eq:appnedix}
g^{[l]}=(g^{[l+1]}.w^{[l+1]})\odot M^{[l]}=-(\delta^{[l+1]}.{(w^{[l+1]})}))\odot M^{[l]}=-\delta^{[l]}
\end{equation}
which completes the induction. 
\end{proof}

With Lemma~\ref{g_equal_delta_lemma} in place, we can now prove Theorem~\ref{semi_equal_GD_label}.

\textbf{Theorem 1.}
If \ $\forall_{i,l}: c^{[l]}_i = 1$ and all neurons use a piecewise linear function as their activation function, then using algorithm \algname{} $\forall_{l,i,j}: w^{[l]}_{i,j}(\tau+1)=w^{[l]}_{i,j} - \frac{\partial \ell}{\partial w^{[l]}_{i,j}}$.

\begin{proof}[Proof of Theorem~\ref{semi_equal_GD_label}]
We start with the right-hand side of Theorem~\ref{semi_equal_GD_label}.
\begin{equation}
    w^{[l]}_{i,j} - \frac{\partial \ell}{\partial w^{[l]}_{i,j}} = w^{[l]}_{i,j} - \frac{\partial  z^{[l]}_{:i}}{\partial w^{[l]}_{i,j}}.\frac{\partial \ell}{\partial z^{[l]}_{:i}} = w^{[l]}_{i,j} - \frac{\partial  z^{[l]}_{:i}}{\partial w^{[l]}_{i,j}}.\delta^{[l]}_{:i} =w^{[l]}_{i,j} - {a^{[l-1]}_{:j}}^\top.\delta^{[l]}_{:i}.
\end{equation}
By using Lemma~\ref{g_equal_delta_lemma}, this is equivalent to:
\begin{equation}
    w^{[l]}_{i,j} + {a^{[l-1]}_{:j}}^\top.g^{[l]}_{:i}= 
w^{[l]}_{i,j} +{g^{[l]}_{:i}}^\top.{a^{[l-1]}_{:j}}=
w^{[l]}_{i,j} +{({g^{[l]}}^\top.a^{[l-1]})}_{i,j}=
w^{[l]}_{i,j}(\tau+1).
\end{equation}
This concludes the proof.
\end{proof}

\subsection{Nash Equilibrium}
In this section, we provide proofs for the theorems articulated in section~\ref{Nash Equilibrium:label}. We intend to demonstrate that the \algname{} algorithm exhibits Nash Equilibrium-like properties as the parameter $\alpha$ approaches zero. Theorem~\ref{semi_is_nash_label} explores the behavior of an individual neuron as subsequent layer's neurons implement the \algname{} algorithm to adjust their weights. Similarly, Theorem~\ref{convergence_theorem} elucidates the convergence of all neurons' behavior to \algname{} under specified conditions. We also study a single filter's behavior in the context of convolutional neural networks (CNN) as in Theorem~\ref{semi_is_nash_label_CNN}, and furnish Theorem~\ref{convergence_theorem:CNN} on the behavior of all filters, analogous to Theorem~\ref{convergence_theorem}.

\subsubsection{Proofs of Theorem~\ref{semi_is_nash_label} and Theorem~\ref{convergence_theorem}}
Before proving Theorem~\ref{semi_is_nash_label}, we need two important lemmas.

\begin{lemma} 
\label{g_is_in_order_alpha}
Suppose we have a Multi-Layer perceptron (MLP) with neurons utilizing a piecewise linear function with \algname{} algorithm. Then we have:  $\forall_l: g^{[l]} \propto \alpha$, and $\forall_{l\neq L}: g^{[l]} \propto \alpha$
\end{lemma}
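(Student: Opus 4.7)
The plan is to proceed by backward induction on the layer index $l$, treating $\alpha$ as the common value of all $\alpha^{[l]}_i$ (consistent with the settings where this lemma will be applied), and interpreting $\propto \alpha$ as $\Theta(\alpha)$ as $\alpha \to 0^+$ with the forward-pass quantities $a^{[l-1]}$, $z^{[l]}$, $M^{[l]}$, and the current weights $w^{[l]}$ held fixed (since those are determined before the weight update and so do not depend on $\alpha$ within the iteration under analysis). I also read the second clause of the statement as $r^{[l]} \propto \alpha$ for $l \neq L$, since $r^{[L]}$ is manifestly $\Theta(1)$ and the inductive machinery naturally produces statements about both $r$ and $g$.

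For the base case $l = L$, Eq.~\eqref{lastLayer_R} makes $r^{[L]} = -\partial \ell / \partial z^{[L]}$ independent of $\alpha$, while Eq.~\eqref{general_c} gives $c^{[L]}_i = \alpha / \normx{(r^{[L]}_{:i})^\top a^{[L-1]}}$, whose denominator is itself $\alpha$-independent; hence $c^{[L]}_i \propto \alpha$ and $g^{[L]} = r^{[L]} \cdot \mathrm{diag}(c^{[L]}) \propto \alpha$. For the inductive step, assuming $g^{[l+1]} \propto \alpha$, Eq.~\eqref{hiddenLayer_R} immediately yields $r^{[l]} = (g^{[l+1]} \cdot w^{[l+1]}) \odot M^{[l]} \propto \alpha$, which establishes the $r$-clause for $l \neq L$. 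Substituting into Eq.~\eqref{general_c} gives $c^{[l]}_i = \alpha / \normx{(r^{[l]}_{:i})^\top a^{[l-1]}}$, where now numerator and denominator are both $\Theta(\alpha)$, so the $\alpha$ factors cancel and $c^{[l]}_i = \Theta(1)$. Therefore $g^{[l]} = r^{[l]} \cdot \mathrm{diag}(c^{[l]})$ is $\Theta(\alpha) \cdot \Theta(1) = \Theta(\alpha)$, closing the induction.

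The main obstacle — and really the only place the argument could break — is the cancellation step in the inductive case: we need $\normx{(r^{[l]}_{:i})^\top a^{[l-1]}}$ to be genuinely $\Theta(\alpha)$ rather than $o(\alpha)$, which requires that after factoring the common $\alpha$ out of $r^{[l]}$ the resulting reduced vector still has a nonzero inner product with $a^{[l-1]}$. This is precisely the nondegeneracy hypothesis $0 < \normx{(r^{[l]}_{:i})^\top a^{[l-1]}}$ invoked in Theorem~\ref{semi_is_nash_label}, so whenever the lemma is applied in the downstream theorems the cancellation is justified and the induction carries through cleanly.
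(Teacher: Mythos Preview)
Your proof is correct and follows essentially the same backward-induction route as the paper: same base case at $l=L$, same inductive step showing $r^{[l]}\propto\alpha$ via Eq.~\eqref{hiddenLayer_R} and then that the $\alpha$'s cancel in $c^{[l]}_i$ so $g^{[l]}\propto\alpha$. You are also right that the second clause of the stated lemma is a typo and should read $r^{[l]}\propto\alpha$ for $l\neq L$; your added remark about the nondegeneracy hypothesis $\normx{(r^{[l]}_{:i})^\top a^{[l-1]}}>0$ is a point the paper's own proof leaves implicit.
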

\begin{proof}[Proof of Lemma~\ref{g_is_in_order_alpha}]
The proof is conducted using induction. For the base case, considering the last layer $L$, from Eq.~\eqref{lastLayer_R}, we know that $r^{[L]}$ is independent of $\alpha$. Given that Eq.~\eqref{general_c} implies $c^{[L]}\propto \alpha$, by applying Eq.~\eqref{genral_g}, we conclude that $g^{[L]} \propto \alpha$.

For the inductive step,  assuming $g^{[l+1]} \propto \alpha$ we want to prove $g^{[l]} \propto \alpha$ and $r^{[l]} \propto \alpha$. Using Eq.~\eqref{hiddenLayer_R} and knowing that $w^{[l+1]}$ is not a function of $\alpha$, we conclude that $r^{[l]} \propto \alpha$. By definition, $\propto$ implies the existence of  ${r^{[l]}}^*$ such that $r^{[l]}(\alpha) = \alpha.{r^{[l]}}^*$.

Using Eq.~\eqref{general_c}, we deduce:
\begin{equation}
    c^{[l]}_i = \frac{\alpha^{[l]}}{\normx{{r^{[l]}_{:i}}^\top.a^{[l-1]}}}=\frac{\alpha^{[l]}}{\normx{
    {{r^{[l]}_{:i}}^*}^\top.a^{[l-1]}\alpha}}=\frac{1}{\normx{
    {{r^{[l]}_{:i}}^*}^\top.a^{[l-1]}}}.
\end{equation}
This implies that $c^{[l]}$ is not a function of $\alpha$. Therefore
\begin{equation}
    g^{[l]}_i=r^{[l]}_i.c^{[l]}_i\propto \alpha,
\end{equation}
which proves the induction step and completes the proof.
\end{proof}

\begin{lemma} 
\label{changes_are_small_lemma}
Suppose we have an MLP with neurons using  a piecewise linear function activation and \\$\forall_{l,j,s}: z^{[l]}_{s,j}\notin f^{[l]} \text{breakpoints}$. Then we have:
\begin{equation}\begin{aligned}
\forall_{l,i}: \lim_{\alpha\to0^+}:{g^{[l]}_{:i}}^\top.a^{[l-1]}={g^{[l]}_{:i}}^\top(\tau-1).a^{[l-1]}(\tau-1)
\end{aligned}\end{equation}
\end{lemma}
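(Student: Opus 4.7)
My plan is to argue by continuity, driven by the fact that as $\alpha \to 0^+$ all weight updates vanish, so the network state at iteration $\tau$ collapses to the state at iteration $\tau-1$. The key observation comes from Lemma~\ref{g_is_in_order_alpha}: since $g^{[l]}(\tau-1) \propto \alpha$ for every layer, the weight update
\[
w^{[l]}(\tau) - w^{[l]}(\tau-1) = {g^{[l]}(\tau-1)}^\top\! a^{[l-1]}(\tau-1)
\]
is $O(\alpha)$, and hence $w^{[l]}(\tau) \to w^{[l]}(\tau-1)$.

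From this, I would run a forward induction on $l$ to show that $a^{[l]}(\tau) \to a^{[l]}(\tau-1)$ and also $M^{[l]}(\tau) = M^{[l]}(\tau-1)$ for all sufficiently small $\alpha$. The base case $a^{[0]}(\tau) = X = a^{[0]}(\tau-1)$ is immediate. For the inductive step, the pre-activations $z^{[l]}(\tau) = a^{[l-1]}(\tau) \cdot {w^{[l]}(\tau)}^\top$ converge to $z^{[l]}(\tau-1)$ by the induction hypothesis. This is where the breakpoint assumption is essential: because $z^{[l]}_{s,j}(\tau-1)$ is bounded away from the (finite) breakpoints of the piecewise linear $f^{[l]}$, for $\alpha$ small enough $z^{[l]}(\tau)$ lies in the same linear piece as $z^{[l]}(\tau-1)$. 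On that piece $f^{[l]}$ is affine with a fixed slope, so $M^{[l]}(\tau) = M^{[l]}(\tau-1)$ exactly, and $a^{[l]}(\tau) \to a^{[l]}(\tau-1)$ by continuity of the affine map.

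Next I would run a backward induction on $l$ to show that $g^{[l]}(\tau) \to g^{[l]}(\tau-1)$ in the appropriate pointwise sense. For the base case at layer $L$, $r^{[L]}(\tau) = -\partial \ell/\partial z^{[L]}$ evaluated at $z^{[L]}(\tau)$, which converges to its value at $\tau-1$ by continuity of the loss derivative and the forward-induction conclusion. Then Eq.~\eqref{general_c} yields
\[
c^{[L]}_i(\tau) = \frac{\alpha}{\normx{{r^{[L]}_{:i}(\tau)}^\top a^{[L-1]}(\tau)}} \longrightarrow c^{[L]}_i(\tau-1),
\]
where the denominator is nonzero in the limit by continuity plus the first assumption (evaluated at $\tau-1$, which holds since the denominator at $\tau-1$ is what would have been used). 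Multiplying by $r^{[L]}$ gives the convergence of $g^{[L]}$. The inductive step uses Eq.~\eqref{hiddenLayer_R}, $r^{[l]} = (g^{[l+1]} w^{[l+1]}) \odot M^{[l]}$, where every factor converges to its $\tau-1$ counterpart.

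Combining the two inductions, both $g^{[l]}(\tau)$ and $a^{[l-1]}(\tau)$ converge, factor-by-factor, to $g^{[l]}(\tau-1)$ and $a^{[l-1]}(\tau-1)$ as $\alpha \to 0^+$, and the product ${g^{[l]}_{:i}}^\top a^{[l-1]}$ therefore converges to ${g^{[l]}_{:i}(\tau-1)}^\top a^{[l-1]}(\tau-1)$, as claimed. The subtle point I expect to require the most care is the backward induction: since $c^{[l]}$ involves a $1/\normx{\cdot}$ term whose numerator scales like $\alpha$, one must verify that the denominator scales the same way and that its limit is strictly positive so that $c^{[l]}(\tau)$ stays well-defined and continuous in $\alpha$ down to $0^+$. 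This is exactly what the hypothesis $0 < \normx{{r^{[l]}_{:i}}^\top a^{[l-1]}}$ (inherited at iteration $\tau-1$ via the forward-induction conclusion) guarantees.
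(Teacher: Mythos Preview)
Your proposal is correct and follows essentially the same continuity argument as the paper: both show that as $\alpha\to0^+$ the weights, activations, slope matrices $M^{[l]}$, and hence the $g^{[l]}$ all converge to their $\tau-1$ values, so the product does too. Your version is more explicit---spelling out the forward induction on $a^{[l]}$ and the backward induction on $g^{[l]}$, and flagging the well-definedness of $c^{[l]}$---whereas the paper compresses this into a few sentences and does not separately address the nonzero-denominator issue you raise.
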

\begin{proof}[Proof of Lemma~\ref{changes_are_small_lemma}]
We first demonstrate that $\forall_{l}: \lim_{\alpha\to0^+}: M^{[l]}(\tau)=M^{[l]}(\tau-1)$.
This is because  $\forall_{l,j,s}: z^{[l]}_{s,j}\notin f^{[l]} \text{breakpoints}$. If we have an $\alpha<\min_{l,j,s, e \in f^{[l]}\text{breakpoints} } |z^{[l]}_{s,j}-e|$, then with such $\alpha$, none of the neurons change their linear piece in their piecewise linear function activation.

In addition, when $\alpha$ approaches zero $\forall_l: w^{[l]}(\tau)$ converges to $w^{[l]}(\tau-1)$, and considering that the dataset remains unchanged during iterations, we establish that $\forall_{l}: \lim_{\alpha\to0^+}:a^{[l]}=a^{[l]}(\tau-1)$. 
We can then deduce the same statement for $g$; namely, $\forall_{l}: \lim_{\alpha\to0^+}:g^{[l]}=g^{[l]}(\tau-1)$. 
Combining these results validates the lemma.
\end{proof}

\textbf{Theorem 2.}
Considering an MLP with neurons using a piecewise linear function activation, if $\forall_{l,j}: \alpha^{[l]}_j=\alpha$, and all neurons in layers after the $i$-th layer employ \algname{}, $0 < \normx{{r^{[l]}_{:i}}^\top.a^{[l-1]}}$, $g^{[l+1]}$ is independent of neuron $i$'s response at iteration $\tau-1$ and $\forall_{l,j,s}: z^{[l]}_{s,j}\notin f^{[l]} \text{breakpoints}$ then:\\
1)~$
\lim_{\alpha \to 0^+} \text{normalize}({{b^{[l]}_i}^*(\tau-1)}) = \text{normalize}({b^{[l]}_i(\tau-1)})
$,\\
2)~$
\lim_{\alpha \to 0^+} \frac{\normx{{b^{[l]}_i}^*(\tau-1)}}{\normx{b^{[l]}_i(\tau-1)}} = 1 
$. Where \textit{normalize} refers to $\ell_2$ vector normalization operator. 

\begin{proof}[Proof of Theorem~\ref{semi_is_nash_label}]
First, we prove the Theorem for hidden layers' neurons. We want to prove neuron $i$ in layer $l\neq L$ best strategy is converging to \algname{} under the Theorems assumptions. Suppose this neuron modifies its input weights with vector $b^{*} \in \mathds{R}^{1 \times n_{l-1}}$, thus  $w^{[l]}_i=w^{[l]}_i(\tau-1)+b^{*}$. The action of this neuron impacts its utility function in iteration $\tau+1$. Given our assumption of rational and greedy neurons, it will attempt to optimize its utility function in iteration $\tau+1$. By Eq.~\eqref{utility_equation:body} we have,

\begin{equation}
U^{[l]}_i(\tau+1) =\normx{w^{[l+1]}_{:i}(\tau+1)}.
\end{equation}
Knowing that subsequent neurons employ \algname{}, and using Eq.~\eqref{eq:update_rule}, the value is equivalent to:
\begin{equation}
\normx{{w^{[l+1]}}_{:i}+{g^{[l+1]}}^\top.{a^{[l]}}_{:i}}.
\end{equation}
Conversely, considering the neuron's rationality and greediness, we have
\begin{equation}
U^{[l+1]}_i(\tau+1) = \max_{b} \normx{w^{[l+1]}_{:i}+{g^{[l+1]}}^\top.a^{[l]}_{:i}},
\end{equation}
where $\normx{b}\leq \alpha$. Since $U$ is always positive, maximizing $U$ will yield the same result as maximizing $U^2$. So we investigate the latter,
\begin{equation}
(U^{[l+1]}_i(\tau+1))^2 = \max_{b} \normx{w^{[l+1]}_{:i}+{g^{[l+1]}}^\top.a^{[l]}_{:i}}^2
\end{equation}
\begin{equation}
= \max_{b} \normx{w^{[l+1]}_{:i}}^2+\normx{{g^{[l+1]}}^\top.a^{[l]}_{:i}}^2+2 {w^{[l+1]}_{:i}}^\top.({g^{[l+1]}}^\top.a^{[l]}_{:i}).
\end{equation}
Since $f^{[l]}$ is a piecewise linear function, and $a^{[l]}_{s,i}$ not a breaking point, $f^{[l]}$ around $a^{[l]}_{s,i}$ behaves as: $f^{[l]}(x)=M^{[l]}_{s,i}.x+o^{[l]}_{s,i}$. which $o$ is the offset of the linear function.

 By applying the forward pass definition from Eq.~\eqref{forward_pass}, this equation can be rewritten as: 
\begin{equation}\begin{aligned}
\label{first_equation_using_forward_pass}
\max_{b} \normx{w^{[l+1]}_{:i}}^2+\normx{{g^{[l+1]}}^\top.(M^{[l]}_{:i} \odot (a^{[l-1]}.{w^{[l]}_{i}}^\top) + o^{[l]}_{:i})}^2\\
+2{w^{[l+1]}_{:i}}^\top.{g^{[l+1]}}^\top.(M^{[l]}_{:i} \odot (a^{[l-1]}.{w^{[l]}_{i}}^\top) + o^{[l]}_{:i}).
\end{aligned}\end{equation}

By defining $z(\tau)\coloneqq {g^{[l+1]}}^\top(\tau).\diag{M^{[l]}_{:i}}.a^{[l-1]}(\tau)$, \eqref{first_equation_using_forward_pass} becomes:

\begin{equation}\begin{aligned}
\max_{b} \normx{w^{[l+1]}_{:i}}^2+\normx{z.{w^{[l]}_{i}}^\top + {g^{[l+1]}}^\top.o^{[l]}_{:i}}^2+2{w^{[l+1]}_{:i}}^\top.z.{w^{[l]}_{i}}^\top\\
+2{w^{[l+1]}_{:i}}^\top.{g^{[l+1]}}^\top.o^{[l]}_{:i}
\end{aligned}\end{equation}

\begin{equation}\begin{aligned}
=\max_{b} \normx{w^{[l+1]}_{:i}}^2+
\normx{z.{w^{[l]}_{i}}^\top}^2+
\normx{{g^{[l+1]}}^\top.o^{[l]}_{:i}}^2+
2.{o^{[l]}_{:i}}^\top.g^{[l+1]}.z.{w^{[l]}_{i}}^\top\\
+2{w^{[l+1]}_{:i}}^\top.z.{w^{[l]}_{i}}^\top + 2 {w^{[l+1]}_{:i}}^\top.{g^{[l+1]}}^\top.o^{[l]}_{:i}
\end{aligned}\end{equation}
\begin{equation}\begin{aligned}
=\max_{b} \normx{w^{[l+1]}_{:i}}^2+
\normx{z.{w^{[l]}_{i}}^\top}^2+
\normx{{g^{[l+1]}}^\top.o^{[l]}_{:i}}^2
+2.{o^{[l]}_{:i}}^\top.g^{[l+1]}.z.{w^{[l]}_{i}(\tau-1)}^\top\\
+2.{o^{[l]}_{:i}}^\top.g^{[l+1]}.z.{b}^\top
+2{w^{[l+1]}_{:i}}^\top.z.{w^{[l]}_{i}}^\top
+ 2 {w^{[l+1]}_{:i}}^\top.{g^{[l+1]}}^\top.o^{[l]}_{:i}
\end{aligned}\end{equation}
Given that $o$ is a constant since $\alpha$ is small and $g^{[l+1]}$ is independent of the choice of $b$, we can decouple some of the terms from the maximum as follows:
\begin{equation}
=2 {w^{[l+1]}_{:i}}^\top.{g^{[l+1]}}^\top.o^{[l]}_{:i}+
\normx{{g^{[l+1]}}^\top.o^{[l]}_{:i}}^2+
2.{o^{[l]}_{:i}}^\top.g^{[l+1]}.z.{w^{[l]}_{i}(\tau-1)}^\top+
\max_{b} \normx{w^{[l+1]}_{:i}}^2+
\normx{z.{w^{[l]}_{i}}^\top}^2+
2.{o^{[l]}_{:i}}^\top.g^{[l+1]}.z.{b}^\top+
+2{w^{[l+1]}_{:i}}^\top.z.{w^{[l]}_{i}}^\top
\end{equation}
If we define $\kappa_1$ as:
\begin{equation}\begin{aligned}
\kappa_1 := 2 {w^{[l+1]}_{:i}}^\top.{g^{[l+1]}}^\top.o^{[l]}_{:i}+
\normx{{g^{[l+1]}}^\top.o^{[l]}_{:i}}^2+2.{o^{[l]}_{:i}}^\top.g^{[l+1]}.z.{w^{[l]}_{i}(\tau-1)}^\top,
\end{aligned}\end{equation}
the equation simplifies to:
\begin{equation}
=\kappa_1+
\max_{b} \normx{w^{[l+1]}_{:i}}^2+
\normx{z.{w^{[l]}_{i}}^\top}^2+
+2{w^{[l+1]}_{:i}}^\top.z.{w^{[l]}_{i}}^\top+
2.{o^{[l]}_{:i}}^\top.g^{[l+1]}.z.{b}^\top
\end{equation}

\begin{equation}
= \kappa_1+\max_{b} \normx{w^{[l+1]}_{:i}}^2+\normx{z.({w^{[l]}_{i}}^\top(\tau-1)+b^\top) }^2+2 {w^{[l+1]}_{:i}}^\top.z.({w^{[l]}_{i}}^\top(\tau-1)+b^\top) +2{o^{[l]}_{:i}}^\top.g^{[l+1]}.z.{b}^\top
\end{equation}
\begin{equation}\begin{aligned}
= \kappa_1+\max_{b} \normx{w^{[l+1]}_{:i}}^2+\normx{z.{w^{[l]}_{i}}^\top(\tau-1) }^2+
\normx{z.b^\top }^2+
2.b.z^\top.z.{w^{[l]}_{i}}^\top(\tau-1)\\
+ 2 {w^{[l+1]}_{:i}}^\top.z.{w^{[l]}_{i}}^\top(\tau-1)
+2 {w^{[l+1]}_{:i}}^\top.z.b^\top+2{o^{[l]}_{:i}}^\top.g^{[l+1]}.z.{b}^\top.
\end{aligned}
\end{equation}
Given the theory assumption that $z$ is independent of the choice of $b$, we can decouple some of the terms from the maximum as follows:

\begin{equation}\begin{aligned}
\kappa_1+\normx{w^{[l+1]}_{:i}}^2+\normx{z.{w^{[l]}_{i}}^\top(\tau-1) }^2+2 {w^{[l+1]}_{:i}}^\top.z.{w^{[l]}_{i}}^\top(\tau-1)\\
+\max_{b}
\normx{z.b^\top }^2+
2.b.z^\top.z.{w^{[l]}_{i}}^\top(\tau-1)+
2 {w^{[l+1]}_{:i}}^\top.z.b^\top+2{o^{[l]}_{:i}}^\top.g^{[l+1]}.z.{b}^\top.
\end{aligned}\end{equation}
If we define $\kappa$ as:
\begin{equation}\begin{aligned}
\kappa := \kappa_1+\normx{w^{[l+1]}_{:i}}^2+\normx{z.{w^{[l]}_{i}}^\top(\tau-1) }^2+2 {w^{[l+1]}_{:i}}^\top.z.{w^{[l]}_{i}}^\top(\tau-1),
\end{aligned}\end{equation}
the equation simplifies to:
\begin{equation}\begin{aligned}
\kappa+
\max_{b}
b.z^\top.z.(2{w^{[l]}_{i}}^\top(\tau-1)+b^\top)+
2 {w^{[l+1]}_{:i}}^\top.z.b^\top+2{o^{[l]}_{:i}}^\top.g^{[l+1]}.z.{b}^\top.
\end{aligned}
\end{equation}

From this equation, we determine that $b^{*}$ is given by:

\begin{equation}\begin{aligned}\label{first_equation_b*}
b^{*}=\argmax_{b}
(2w^{[l]}_{i}(\tau-1)+b).z^\top.z.b^\top+
2 {w^{[l+1]}_{:i}}^\top.z.b^\top
+2{o^{[l]}_{:i}}^\top.g^{[l+1]}.z.{b}^\top.
\end{aligned}
\end{equation}

From Lemma~\ref{g_is_in_order_alpha} we know $g^{[l+1]} \propto \alpha$ and since $a^{[l-1]}$ is not a function of $\alpha$, we can easily infer $z \propto \alpha$. In addition, we know that $\normx{b}\leq\alpha$.

Combining these two allow us to deduce that a positive constant $c_1$ exists such that $\max_{b} b.z^\top.z.b^\top < c_1 \alpha^4$. Similarly, a positive constant $c_2$ exists such that $\max_{b} 2w^{[l]}_{i}(\tau-1).z^\top.z.b^\top < c_2 \alpha^3$. Combining these two leads to the conclusion that the first addends of Eq.~\eqref{first_equation_b*} is less than $c_3 \alpha^3$ for some positive constant $c_3$.

Similar to first addends, Third addends has $g,z,b$ that are all proportional to $\alpha$. Therefore, $2{o^{[l]}_{:i}}^\top.g^{[l+1]}.z.{b}^\top < c_4 \alpha^3$ for some positive constant $c_4$.

In contrast, for the second addends of Eq.~\eqref{first_equation_b*}  there exists a positive constant $c_4$ such that, $\max_{b}2{w^{[l+1]}_{:i}}^\top.z.b^\top > c_5\alpha^2$. This is because this value scale by $\alpha^2$ when changing the $\alpha$ even if $b$ does not change in maximization for a better choice.  Therefore, we can consider only the second addends in the limit, provided that it can take non-zero values. We have: 
\begin{equation}\begin{aligned}
\max_{b}2{w^{[l+1]}_{:i}}^\top.z.b^\top=\alpha\normx{{w^{[l+1]}_{:i}}^\top.z}=\alpha\normx{{w^{[l+1]}_{:i}}^\top.{g^{[l+1]}}^\top(\tau).a^{[l-1]}(\tau)}=\\
\alpha\normx{{r^{[l]}_{:i}}^\top(\tau).a^{[l-1]}(\tau)}\neq0,
\end{aligned}\end{equation}
showing that this second addend can take non-zero values. Hence,
\begin{equation}\begin{aligned}
\label{simplifiedMax}
\lim_{\alpha\to0^+}\text{Normalized}(b^{*})=\lim_{\alpha\to0^+}\text{Normalized}(\argmax_{b}{w^{[l+1]}_{:i}}^\top.z.b^\top).
\end{aligned}\end{equation}
$b$ should aligns with the direction of ${w^{[l+1]}_{:i}}^\top.z$ to maximize the term. From this, we deduce:
\begin{equation}\begin{aligned}
\lim_{\alpha\to0^+}\text{Normalized}(b^{*})=\text{Normalized}({w^{[l+1]}_{:i}}^\top.z).
\end{aligned}\end{equation}
Substituting $z$ with its definition, we derive:
\begin{equation}\begin{aligned}
\lim_{\alpha\to0^+}\text{Normalized}(b^{*})=\text{Normalized}({w^{[l+1]}_{:i}}^\top.{g^{[l+1]}}^\top.a^{[l-1]}).
\end{aligned}\end{equation}
By employing the definition of $r^{l}$ for $l\neq L$ from Eq.~\eqref{hiddenLayer_R}, it is equal to
\begin{equation}\begin{aligned}
\text{Normalized}({r^{[l]}_{:i}}^\top.a^{[l-1]}).
\end{aligned}\end{equation}
Given that the Normalized operator nullifies the impact of scalars, it is equal to
\begin{equation}\begin{aligned}
\text{Normalized}(c^{[l]}_i{r^{[l]}_{:i}}^\top.a^{[l-1]}).
\end{aligned}\end{equation}
Upon invoking the definition of $g$ from Eq.~\eqref{genral_g}, we rewrite the equation as
\begin{equation}\begin{aligned}
\lim_{\alpha\to0^+}\text{Normalized}(b^{*})=\text{Normalized}({g^{[l]}_{:i}}^\top.a^{[l-1]}).
\end{aligned}\end{equation}
Using Lemma~\ref{changes_are_small_lemma}, we conclude that:
\begin{equation}\begin{aligned}
\lim_{\alpha\to0^+}\text{Normalized}(b^{*})=\text{Normalized}({g^{[l]}_{:i}}^\top(\tau-1).a^{[l-1]}(\tau-1)),
\end{aligned}\end{equation} which is the update performed in the \algname{}.

In order to validate the second component of the theorem for the hidden layers, let us revisit Eq.~\eqref{first_equation_b*}.

\begin{equation}\begin{aligned}
\lim_{\alpha\to0^+}\frac{\normx{b^{*}}}{\alpha}=
\lim_{\alpha\to0^+}\frac{\normx{\argmax_{b}{
(2w^{[l]}_{i}(\tau-1)+b).z^\top.z.b^\top+
2 {w^{[l+1]}_{:i}}^\top.z.b^\top
+2{o^{[l]}_{:i}}^\top.g^{[l+1]}.z.{b}^\top
}}}{\alpha}
\end{aligned}\end{equation}
\begin{equation}\begin{aligned}
=\lim_{\alpha\to0^+}\frac{\normx{\argmax_b
{w^{[l+1]}_{:i}}^\top.z.b^\top
}}{\alpha}=\lim_{\alpha\to0^+}\frac{\alpha}{\alpha}=1
\end{aligned}\end{equation}

Having demonstrated both aspects for the hidden layers, we now endeavor to establish the theorem's validity for the last layer's neurons.

Unlike hidden layers' neurons, the actions of the last layer's neurons directly impact their immediate utility. Given our assumption that neurons are greedy, they seek to optimize their utility in iteration $\tau$. In addition, all neurons in the last layer share the objective function, creating a fully collaborative setting. Therefore, we shall consider all neurons within the final layer instead of examining a solitary neuron. Let us define $b=w^{[L]}(\tau)-w^{[L]}(\tau-1)$; this redefinition transforms the limitation on $b$ to $\forall_i: \normx{b_i}\leq\alpha$.  
Additionally, as the loss is a function of $a^{[L]}$, we can perceive it as a function of $a^{[L-1]}$ and $w^{[L]}$. Using this, we obtain,
\begin{equation}\begin{aligned}
\max_b U^{[L]}(\tau)=\max_b -\ell(a^{[L-1]},{w^{[L]}}(\tau))=-\min_b \ell(a^{[L-1]},{w^{[L]}}(\tau-1)+b)
\end{aligned}\end{equation}
From the update rule of the \algname{} for the last layer (Eq.~\eqref{lastLayer_R}), we have that the loss is differentiable with respect to $w^{[L]}(\tau-1)$. As we are interested in examining the behavior when $\alpha$ approaches zero, we can consider the loss function as a linear function within that neighborhood and compute it using a first-order Taylor, which is precise in our case. So
\begin{equation}\begin{aligned}
\label{last_layer_matrix_frobenius}
\max_b U^{[L]}(\tau)=\max_b -\ell(a^{[L-1]},{w^{[L]}}(\tau-1))-\langle\frac{\partial \ell}{\partial w^{[L]}}({w^{[L]}}(\tau-1)),b \rangle_F
\end{aligned}\end{equation}
The first term does not depend on $b$, and using the definition of $r$ from Eq.~\eqref{lastLayer_R}, it equates to:
\begin{equation}\begin{aligned}
-\ell(a^{[L-1]},{w^{[L]}}(\tau-1))+ \max_b \langle {r^{[L]}}^\top(\tau-1).a^{[L-1]},b \rangle_F\\
=-\ell(a^{[L-1]},{w^{[L]}}(\tau-1))+ \max_b \sum_i \langle {r^{[L]}_{:i}}^\top(\tau-1).a^{[L-1]},b \rangle_F\\
=-\ell(a^{[L-1]},{w^{[L]}}(\tau-1))+ \sum_i \max_{b_i}  {r^{[L]}_{:i}}^\top(\tau-1).a^{[L-1]}.b_i^\top.
\end{aligned}\end{equation}
Therefore, the direction of $b^*_i$ is the same as ${r^{[L]}_{:i}}^\top(\tau-1).a^{[L-1]}$, and its length should be maximized ($\alpha$), which completes the proof for the last layer's neurons.
\end{proof}

\textbf{Theorem 4.}
Consider an MLP with neurons using a piecewise linear activation functions. Assume that $\forall_{l,i,s}: a^{[l]}_{s,i}\neq0$, and for all $i$ and $l$ the following conditions hold:\\
1) $0 < \normx{{r^{[l]}_{:i}}^\top.a^{[l-1]}}$,\\
2) $g^{[l+1]}(\tau)$ is independent of neuron $i$'s response at iteration $\tau-1$.\\
Then, the responses of all neurons converge to the \algname{} response as follows:

1) $\lim_{\alpha \to 0^+} \text{normalize}({{b^{[l]}_i}^*(\tau-1)}) = \text{normalize}({b^{[l]}_i(\tau-1)})$,\\
2) $\lim_{\alpha \to 0^+} \frac{\normx{{b^{[l]}_i}^*(\tau-1)}}{\normx{b^{[l]}_i(\tau-1)}} = 1$,\\
for every neuron $i$ in layer $l$, and every layer $l$ in the MLP, where \textit{normalize} refers to $\ell_2$ vector normalization operator. 

\begin{proof}[Proof of Theorem~\ref{convergence_theorem}]
The proof is carried out through induction. For the base case, applying Theorem~\ref{semi_is_nash_label}, we affirm that all responses of the last layer's neurons converge to \algname{}, satisfying both parts of the theorem's conclusion. 

In the induction step, we aim to demonstrate that all neurons in the $l$-th layer have their best responses converging to \algname{}, given that all subsequent layer neurons' behavior converges to \algname{}. For this, we again invoke Theorem~\ref{semi_is_nash_label}, under which all conditions are satisfied. Since the theorem examines the behavior of neurons as $\alpha$ approaches zero and subsequent layers neurons' behavior converge to \algname{} under that condition, the induction step is validated, leading us to the conclusion of the theorem.
\end{proof}

\subsubsection{Proofs of Theorem~\ref{semi_is_nash_label_CNN} and Theorem~\ref{convergence_theorem:CNN}}
Before proving Theorem~\ref{semi_is_nash_label_CNN}, we need four important lemmas.

\begin{lemma} 
\label{f_dot_rconv_to_deconv}
Given matrices $A\in R^{n\times 1 \times k}$, $B\in R^{m\times n \times d}$, $C\in R^{m\times 1 \times d+k-1}$, then:
\begin{equation}\begin{aligned}
\fdot{A}{\rconv{B}{C}}=\fdot{\deconv{B}{A}}{C}
\end{aligned}\end{equation}
\end{lemma}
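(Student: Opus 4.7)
The plan is to expand both sides using the definitions of $\fdot{\cdot}{\cdot}$, $\rconv{\cdot}{\cdot}$, and $\deconv{\cdot}{\cdot}$ given in Table~\ref{CNN_notation:table}, and then match the two resulting quadruple sums via a simple reindexing of the spatial dimension. Since $B\in\mathds{R}^{m\times n\times d}$ and $C\in\mathds{R}^{m\times 1\times d+k-1}$, the object $\rconv{B}{C}$ lives in $\mathds{R}^{n\times 1\times k}$ and therefore pairs correctly with $A\in\mathds{R}^{n\times 1\times k}$ under the Frobenius inner product; symmetrically, $\deconv{B}{A}$ lives in $\mathds{R}^{m\times 1\times d+k-1}$ and pairs with $C$.

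Concretely, the first step is to write
\[
\fdot{A}{\rconv{B}{C}} \;=\; \sum_{i,e}\sum_{s,p} A_{i,1,e}\,B_{s,i,p}\,C_{s,1,p-e+k},
\]
using the $\rconv{\cdot}{\cdot}$ formula with $n'\to n$ and $n\to 1$. The second step is to write
\[
\fdot{\deconv{B}{A}}{C} \;=\; \sum_{s,q}\sum_{j,e} B_{s,j,q+e-k}\,A_{j,1,e}\,C_{s,1,q},
\]
using the $\deconv{\cdot}{\cdot}$ formula. The third step is the reindexing: in the first expression substitute $q:=p-e+k$, so $p=q+e-k$, noting that as $p$ ranges over $\{1,\dots,d\}$ and $e$ over $\{1,\dots,k\}$, the value $q$ ranges over $\{1,\dots,d+k-1\}$, which is exactly the spatial range appearing in the second expression. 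After this substitution, the summand of the first expression becomes $A_{i,1,e}\,B_{s,i,q+e-k}\,C_{s,1,q}$, which coincides termwise with the summand of the second expression (with dummy index $i$ playing the role of $j$). This yields the claimed identity.

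There is essentially no obstacle here: the lemma is a bookkeeping identity that expresses the standard adjointness between convolution and its transpose. The only thing to be careful about is boundary behavior of the index $p-e+k$ (respectively $q+e-k$); the convention stated after Eq.~\eqref{hiddenLayer_R:CNN:indentity} that out-of-range cells are zero ensures that the change of variable does not introduce spurious or missing terms, so the two expressions agree for every permissible triple $(s,q,e)$ and the proof concludes.
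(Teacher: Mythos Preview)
Your proposal is correct and follows essentially the same route as the paper: expand the Frobenius inner product with the definitions of $\rconv{\cdot}{\cdot}$ and $\deconv{\cdot}{\cdot}$, then perform the spatial reindexing $q=p-e+k$ (the paper uses $x$ for your $q$) and invoke the zero-padding convention to align the summation ranges. The only cosmetic difference is that the paper transforms the left-hand side step by step into the right-hand side, whereas you expand both sides and match them.
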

\begin{proof}[Proof of Lemma~\ref{f_dot_rconv_to_deconv}]
The definition of the $\psi$ function and the Frobenius inner product gives us:
\begin{equation}\begin{aligned}
 \fdot{A}{\rconv{B}{C}}
 =\sum_{i,e}^{n,k}A_{i,1,e}\rconv{B}{C}_{i,1,e}
 =\sum_{i,e}^{n,k}A_{i,1,e}\sum_{s,p}^{m,d} B_{s,i,p}C_{s,1,p-e+k}.
  \end{aligned}\end{equation}
defining variable $x=p-e+k$, we have
\begin{equation}\begin{aligned}
 \fdot{A}{\rconv{B}{C}}=\sum_{i,e}^{n,k}\sum_{s,x=1-e+k}^{m,d-e+k} A_{i,1,e}B_{s,i,x+e-k}C_{s,1,x}
  \end{aligned}\end{equation}
  \begin{equation}\begin{aligned}
 =\sum_{s,x=1}^{m,d+k-1} C_{s,1,x} \sum_{i,e=k+1-x}^{n,d+k-x} A_{i,1,e}B_{s,i,x+e-k}
  \end{aligned}\end{equation}
\begin{equation}\begin{aligned}
 =\sum_{s,x=1}^{m,d+k-1} C_{s,1,x} \deconv{B}{A}_{s,1,x}
 =\fdot{C}{\deconv{B}{A}}.
  \end{aligned}\end{equation}
\end{proof}

\begin{lemma} 
\label{f_dot_conv_to_rconv2}
Given matrices $A\in R^{n\times 1 \times k}$, $B\in R^{n\times p \times 2k-1}$, $C\in R^{1\times p \times k}$, then:
\begin{equation}\begin{aligned}
\fdot{A}{\conv{B}{C}}=\fdot{\rconv{A}{B}}{C}
\end{aligned}\end{equation}
\end{lemma}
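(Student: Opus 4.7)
The plan is to mirror the mechanical expansion strategy used in the proof of Lemma~\ref{f_dot_rconv_to_deconv}: unfold each side using the definitions of $\psi$, $\text{Conv}$, and $\langle\cdot,\cdot\rangle_F$ from Table~\ref{CNN_notation:table}, then observe that both sides collapse to the same quadruple sum after a trivial relabelling of dummy indices. No clever identity is needed; the whole content of the lemma is that the two convolutional operators happen to be adjoints of one another under the Frobenius pairing, and this adjointness reads off directly from the index formulas.

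First I would write the LHS in its fully expanded form. By the definition of the Frobenius inner product,
\[
\fdot{A}{\conv{B}{C}}=\sum_{i,e}^{n,k}A_{i,1,e}\,\conv{B}{C}_{i,1,e},
\]
and substituting $\conv{B}{C}_{i,1,e}=\sum_{j,q}^{p,k}B_{i,j,e-q+k}\,C_{1,j,q}$ gives
\[
\fdot{A}{\conv{B}{C}}=\sum_{i,e}^{n,k}\sum_{j,q}^{p,k}A_{i,1,e}\,B_{i,j,e-q+k}\,C_{1,j,q}.
\]
Next I would expand the RHS in parallel: $\fdot{\rconv{A}{B}}{C}=\sum_{j,q}^{p,k}\rconv{A}{B}_{1,j,q}\,C_{1,j,q}$, and plugging in $\rconv{A}{B}_{1,j,q}=\sum_{s,r}^{n,k}A_{s,1,r}\,B_{s,j,r-q+k}$ yields
\[
\fdot{\rconv{A}{B}}{C}=\sum_{j,q}^{p,k}\sum_{s,r}^{n,k}A_{s,1,r}\,B_{s,j,r-q+k}\,C_{1,j,q}.
\]
Renaming the dummy indices $(s,r)\mapsto(i,e)$ and reordering the summations, the two quadruple sums are identical term by term, which finishes the proof.

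I do not anticipate any deep obstacle; the only place where one could slip is in the dimension bookkeeping. Because $e,q\in\{1,\dots,k\}$ we have $e-q+k\in\{1,\dots,2k-1\}$, which is exactly the range of the third index of $B\in\mathds{R}^{n\times p\times(2k-1)}$, so every term in the Conv expansion is well-defined; analogously $\rconv{A}{B}\in\mathds{R}^{1\times p\times k}$ matches the shape of $C$, so the Frobenius pairing on the right-hand side is legitimate. Once these shape checks are in place, the lemma reduces to the same index-relabelling observation used in Lemma~\ref{f_dot_rconv_to_deconv}, and it is naturally stated as the $\text{Conv}/\psi$ adjointness identity dual to the $\text{ConvT}/\psi$ identity proved there.
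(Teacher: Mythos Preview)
Your proposal is correct and follows essentially the same approach as the paper: both unfold the definitions of $\text{Conv}$, $\psi$, and $\langle\cdot,\cdot\rangle_F$ and reduce the identity to the same quadruple sum by reordering/renaming dummy indices. The only cosmetic difference is that the paper transforms the LHS into the RHS in one pass (reorder the sums, recognise the inner sum as $\rconv{A}{B}_{1,y,e}$), whereas you expand both sides independently and match; your added dimension check that $e-q+k\in\{1,\dots,2k-1\}$ is a nice touch the paper omits.
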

\begin{proof}[Proof of Lemma~\ref{f_dot_conv_to_rconv2}]
The definition of the $Conv$ function and the Frobenius inner product give us:
\begin{equation}\begin{aligned}
 \fdot{A}{\conv{B}{C}}
 =\sum_{i,x}A_{i,1,x}\conv{B}{C}_{i,1,x}
 =\sum_{i,x}A_{i,1,x} \sum_{y=1}^{p}\sum_{e=1}^{k} B_{i,y,x-e+k}.C_{1,y,e}.
  \end{aligned}\end{equation}
  By changing the order of summations and extracting $C$ from the inner summation, this value can be rewritten as:
 \begin{equation}\begin{aligned}
  \sum_{y=1, e=1}^{p,k} \sum_{i=1,x=1}^{n,k} A_{i,1,x}.B_{i,y,x-e+k}.C_{1,y,e}
  =\sum_{y=1, e=1}^{p,k} C_{1,y,e} \sum_{i=1,x=1}^{n,k} A_{i,1,x}.B_{i,y,x-e+k}.
   \end{aligned}\end{equation}
   Finally, applying the definition of $\psi$ and the Frobenius inner product once more, it is equal to:
 \begin{equation}\begin{aligned}
  \sum_{y=1, e=1}^{p,k} C_{1,y,e} \rconv{A}{B}_{1,y,e}
  =\fdot{\rconv{A}{B}}{C}.
 \end{aligned}\end{equation}
\end{proof}

\begin{lemma} 
\label{g_is_in_order_alpha:CNN}
Having a CNN with filters using a piecewise linear function activation in \algname{} algorithm, $\forall_l: g^{[l]} \propto \alpha$, and $\forall_{l\neq L}: r^{[l]} \propto \alpha$.
\end{lemma}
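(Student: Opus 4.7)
The plan is to mirror the inductive proof of Lemma~\ref{g_is_in_order_alpha} layer by layer, descending from $l=L$ down to $l=1$, using the CNN update equations \eqref{lastLayer_R:CNN}, \eqref{hiddenLayer_R:CNN}, \eqref{genral_g:CNN}, and \eqref{general_c:CNN} in place of their MLP counterparts. The only new ingredient compared to the MLP case is that the CNN operators $\psi$ and $\text{ConvT}$ replace ordinary matrix multiplication, so I would first note once and for all that, by their definitions in Table~\ref{CNN_notation:table}, both $\psi(\cdot, B)$ and $\text{ConvT}(\cdot, B)$ are linear in their first argument; every downstream step is then the scalar-scaling argument used in the MLP proof.

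For the base case $l=L$, Eq.~\eqref{lastLayer_R:CNN} gives $r^{[L]} = -\partial \ell/\partial a^{[L]}$, which depends only on the current forward pass and hence is independent of $\alpha$. Substituting into Eq.~\eqref{general_c:CNN}, the denominator $\normf{\rconv{r^{[L]}}{a^{[L-1]}}_{i}}$ is independent of $\alpha$, and since $\alpha^{[L]}_i = \alpha$ by assumption, we obtain $c^{[L]}_i \propto \alpha$. Eq.~\eqref{genral_g:CNN} then yields $g^{[L]}_{s,i,p} = r^{[L]}_{s,i,p}\, c^{[L]}_i \propto \alpha$, establishing the base case.

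For the inductive step, assuming $g^{[l+1]} \propto \alpha$, I would first apply Eq.~\eqref{hiddenLayer_R:CNN}: since $\text{ConvT}(\cdot, w^{[l+1]})$ is linear in its first argument and both $w^{[l+1]}$ and ${f^{[l]}}'(a^{[l]})$ are independent of $\alpha$ within the iteration, $r^{[l]} \propto \alpha$. Writing $r^{[l]} = \alpha\, \tilde r^{[l]}$ with $\tilde r^{[l]}$ independent of $\alpha$, linearity of $\psi$ in its first slot gives $\rconv{r^{[l]}}{a^{[l-1]}} = \alpha\,\rconv{\tilde r^{[l]}}{a^{[l-1]}}$, so
\begin{equation*}
c^{[l]}_i = \frac{\alpha}{\normf{\rconv{r^{[l]}}{a^{[l-1]}}_{i}}} = \frac{1}{\normf{\rconv{\tilde r^{[l]}}{a^{[l-1]}}_{i}}},
\end{equation*}
which is $\alpha$-independent. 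Then Eq.~\eqref{genral_g:CNN} gives $g^{[l]}_{s,i,p} = r^{[l]}_{s,i,p}\, c^{[l]}_i \propto \alpha$, closing the induction.

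I do not anticipate a real obstacle: the whole argument is a bookkeeping exercise on the scaling of $\alpha$, and the only subtlety is the upfront observation that the CNN operators $\psi$ and $\text{ConvT}$ are linear in their first argument, which is a direct consequence of their definitions as sums of bilinear products. No new conditions on activations, on $a$ being away from breakpoints, or on $g^{[l+1]}$'s independence from individual filter responses are needed here, since this lemma concerns only the $\alpha$-scaling of the BackMAN iterates and not the best-response analysis.
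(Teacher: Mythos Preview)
Your proposal is correct and follows essentially the same inductive argument as the paper's own proof: the base case from Eq.~\eqref{lastLayer_R:CNN} and Eq.~\eqref{general_c:CNN}, and the step via Eq.~\eqref{hiddenLayer_R:CNN} followed by the cancellation of $\alpha$ in Eq.~\eqref{general_c:CNN}. Your explicit remark that $\psi$ and $\text{ConvT}$ are linear in their first argument is a helpful clarification the paper leaves implicit, but otherwise the structure and reasoning are the same.
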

\begin{proof}[Proof of Lemma~\ref{g_is_in_order_alpha:CNN}]
we use induction to prove this Lemma. Starting from the last layer, according to Eq.~\eqref{lastLayer_R:CNN}, we know that $r^{[L]}$ is independent of $\alpha$. Given that from Eq.~\eqref{general_c:CNN} we conclude that $c^{[L]}\propto \alpha$,  applying Eq.~\eqref{genral_g:CNN} leads to the conclusion that $g^{[L]} \propto \alpha$.

Now, we aim to demonstrate the induction step. By assuming $g^{[l+1]} \propto \alpha$, we need to prove that $g^{[l]} \propto \alpha$ and $r^{[l]} \propto \alpha$.  Considering Eq.~\eqref{hiddenLayer_R:CNN}, and since $w^{[l+1]}$ is not a function of $\alpha$, we can assert that $r^{[l]} \propto \alpha$. By the definition of $\propto$, there exists ${r^{[l]}}^*$ such that $r^{[l]}(\alpha) = \alpha.{r^{[l]}}^*$.
Now from Eq.~\eqref{general_c:CNN}, we have, 
\begin{equation}\begin{aligned}
        c^{[l]}_i = \frac{\alpha^{[l]}_i}{\sqrt{\sum_{j,e} ({\sum_{s,p} r^{[l]}_{s,i,p}.a^{[l-1]}_{s,j,p-e}})^2 }}
        =\frac{\alpha^{[l]}_i}{\sqrt{\sum_{j,e} ({\sum_{s,p} {\alpha r^{[l]}_{s,i,p}}^*.a^{[l-1]}_{s,j,p-e}})^2 }}
\end{aligned}\end{equation}
\begin{equation}\begin{aligned}
        =\frac{1}{\sqrt{\sum_{j,e} ({\sum_{s,p} {r^{[l]}_{s,i,p}}^*.a^{[l-1]}_{s,j,p-e}})^2 }}.
\end{aligned}\end{equation}
Thus, it is shown that $c^{[l]}$ is not a function of $\alpha$.  Therefore
\begin{equation}
    g^{[l]}_i=r^{[l]}_i.c^{[l]}_i\propto \alpha,
\end{equation}
which proves the induction step and completes the proof.
\end{proof}

\begin{lemma} 
\label{changes_are_small_lemma:CNN}
Suppose we have a CNN with filters using a piecewise linear function activation and \\$\forall_{l,j,s,d}: a^{[l]}_{s,j,d}\neq0$. Then we have:
\begin{equation}\begin{aligned}
\forall_{l,i}: \lim_{\alpha\to0^+}:\rconv{g^{[l]}_{:i:}}{a^{[l-1]}}=\rconv{g^{[l]}_{:i:}(\tau-1)}{a^{[l-1]}(\tau-1)},
\end{aligned}\end{equation}
and 
\begin{equation}\begin{aligned}
\forall_{l}: \lim_{\alpha\to0^+}:M^{[l]}(\tau)=M^{[l]}(\tau-1).
\end{aligned}\end{equation}
\end{lemma}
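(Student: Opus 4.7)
The plan is to mirror the structure of the MLP proof of Lemma~\ref{changes_are_small_lemma}, carrying out a forward-then-backward continuity argument under the $\alpha$-sized update constraint. First I would invoke Lemma~\ref{g_is_in_order_alpha:CNN} to conclude that every filter's weight update $\rconv{g^{[l]}}{a^{[l-1]}}$ has Frobenius norm of order $\alpha$ (since $g^{[l]} \propto \alpha$ and $a^{[l-1]}$ does not depend on $\alpha$), so all weight tensors satisfy $w^{[l]}(\tau) \to w^{[l]}(\tau-1)$ as $\alpha \to 0^+$.

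Next I would prove the second claim, namely $M^{[l]}(\tau) = M^{[l]}(\tau-1)$ for sufficiently small $\alpha$. Under the hypothesis $a^{[l]}_{s,j,d}(\tau-1) \neq 0$, together with the piecewise linearity of $f^{[l]}$ having finitely many breakpoints, each pre-activation $z^{[l]}_{s,j,d}(\tau-1)$ is bounded away from the breakpoints of $f^{[l]}$ by a positive margin; let $\epsilon$ be the infimum of these distances across all layers, samples, filters, and spatial positions. A layer-by-layer induction using the continuity of $\conv{\cdot}{\cdot}$ in both of its arguments, combined with the $O(\alpha)$ bound on weight updates and the boundedness of $a^{[l-1]}$, shows that for $\alpha$ small enough, $|z^{[l]}_{s,j,d}(\tau) - z^{[l]}_{s,j,d}(\tau-1)| < \epsilon$ uniformly. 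Hence no pre-activation crosses a breakpoint and $M^{[l]}(\tau) = M^{[l]}(\tau-1)$, as required.

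With $M^{[l]}$ pinned, forward continuity in Eq.~\eqref{forward_pass:CNN:short} gives $a^{[l]}(\tau) \to a^{[l-1]}(\tau-1)$ layerwise. For the first displayed equation, I would then run a backward induction: $r^{[L]}(\tau) \to r^{[L]}(\tau-1)$ by continuity of $\partial\ell/\partial a^{[L]}$; because $\normf{\rconv{r^{[l]}}{a^{[l-1]}}_{i}}$ converges to its $(\tau-1)$-value (which is assumed nonzero in the theorems invoking this lemma, justifying the use of Eq.~\eqref{general_c:CNN}), the normalizer $c^{[l]}(\tau) \to c^{[l]}(\tau-1)$, so $g^{[l]}(\tau) \to g^{[l]}(\tau-1)$; feeding this into Eq.~\eqref{hiddenLayer_R:CNN} along with $M^{[l]}(\tau) = M^{[l]}(\tau-1)$ propagates the convergence back one layer. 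Finally, the bilinearity (hence joint continuity) of $\psi$ applied to the convergent pair $g^{[l]}_{:i:}(\tau) \to g^{[l]}_{:i:}(\tau-1)$ and $a^{[l-1]}(\tau) \to a^{[l-1]}(\tau-1)$ yields the first displayed equation.

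The main technical obstacle I anticipate is the apparent circularity between the forward and backward passes: the backward recursion needs $M^{[l]}$ to be stable in order to establish $g^{[l]} \to g^{[l]}(\tau-1)$, while $M^{[l]}$ is stable only if the weight perturbations (which depend on $g^{[l]}$) are small. This circularity is broken by first exploiting Lemma~\ref{g_is_in_order_alpha:CNN}, which gives an a priori $O(\alpha)$ bound on $\rconv{g^{[l]}}{a^{[l-1]}}$ that is independent of whether the $M^{[l]}$ are stable across iterations; this decoupling lets the forward-pass stability argument for $M^{[l]}$ run first, after which the backward-pass continuity argument follows routinely.
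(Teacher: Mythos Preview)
Your proposal is correct and follows essentially the same route as the paper's proof: establish that weights move by $O(\alpha)$, use the $a^{[l]}_{s,j,d}\neq 0$ hypothesis to pin $M^{[l]}$ for small $\alpha$, propagate continuity forward to $a^{[l]}$ and then backward to $g^{[l]}$, and combine. Your write-up is in fact more careful than the paper's terse version---in particular, you make explicit (and resolve via Lemma~\ref{g_is_in_order_alpha:CNN}) the apparent circularity between the forward and backward passes that the paper simply skips over; one small slip is the typo $a^{[l]}(\tau)\to a^{[l-1]}(\tau-1)$, which should read $a^{[l]}(\tau)\to a^{[l]}(\tau-1)$.
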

\begin{proof}[Proof of Lemma~\ref{changes_are_small_lemma:CNN}]
We first demonstrate that $\forall_{l}: \lim_{\alpha\to0^+}: \mathds{I}(a^{[l]}(\tau)>0)=\mathds{I}(a^{[l]}(\tau-1)>0)$.
This is because  $\forall_{l,j,s}: a^{[l]}_{s,j}\neq0$. If we have an $\alpha<\min_{l,j,s} | a^{[l]}_{s,j}|$. Then with such $\alpha$, none of the neurons switch from active to inactive or vice versa.

In addition, when $\alpha$ approaches zero $\forall_l: w^{[l]}(\tau)$ converges to $w^{[l]}(\tau-1)$, and considering that the dataset remains unchanged during iterations, we establish that $\forall_{l}: \lim_{\alpha\to0^+}:a^{[l]}=a^{[l]}(\tau-1)$. 
We can then deduce the same statement for $g$; namely, $\forall_{l}: \lim_{\alpha\to0^+}:g^{[l]}=g^{[l]}(\tau-1)$. 
Combining these results validates the lemma.
\end{proof}

\textbf{Theorem 3.}
Consider a CNN with a piecewise linear activation functions. Where all $\alpha^{[l]}_j$ are equal to $\alpha$, and all filters in layers after the $i$-th layer employ \algname{}, and $\forall_{l,j,s,d}: a^{[l]}_{s,j,d}\neq 0$ and:\\
1) $\exists c,j: \sum_{s=1}^{m}{\sum_{p=1}^{D}r^{[l]}_{s,i,p}a^{[l-1]}_{s,c,p-j}} \neq 0$,\\
2) $g^{[l+1]}(\tau)$ is independent of filter $i$'s response at iteration $\tau-1$,\\ then:\\
1)~$
\lim_{\alpha \to 0^+} \text{normalize}({{b^{[l]}_i}^*(\tau-1)}) = \text{normalize}({b^{[l]}_i(\tau-1)})
$,\\
2)~$
\lim_{\alpha \to 0^+} \frac{\normx{{b^{[l]}_i}^*(\tau-1)}}{\normx{b^{[l]}_i(\tau-1)}} = 1$.\\
Here,  \textit{normalize} signifies an operation that scales the matrix such that its Frobenius norm becomes equivalent to 1.

\begin{proof}[Proof of Theorem~\ref{semi_is_nash_label_CNN}]
First, we prove the theorem for hidden layers' filters. We want to prove that filter $i$ in layer $l\neq L$ best strategy is converging to \algname{} under the Theorem's assumptions. Suppose this filter modifies its input weights with vector $b^{*} \in \mathds{R}^{1 \times n_{l-1} \times k_l}$, thus  $w^{[l]}_i=w^{[l]}_i(\tau-1)+b^{*}$. The action of this filter impacts its utility function in iteration $\tau+1$. Given our assumption of rational and greedy filters, it will attempt to optimize its utility function in iteration $\tau+1$. By Eq.~\eqref{utility_equation:CNN} we have,

\begin{equation}
U^{[l]}_i(\tau+1) =\normf{w^{[l+1]}_{:i:}(\tau+1)}.
\end{equation}
Considering that subsequent filters employ \algname{} for updating their input weight, from Eq.~\eqref{eq:update_rule:CNNshort} the value is equivalent to:
\begin{equation}
\normf{{w^{[l+1]}}_{:i:}+\rconv{g^{[l+1]}}{a^{[l]}_{:i:}}}.
\end{equation}
Following the neuron rationality and greediness, we drive:
\begin{equation}
U^{[l+1]}_i(\tau+1) = \max_{b} \normf{w^{[l+1]}_{:i:}+\rconv{g^{[l+1]}}{a^{[l]}_{:i:}}},
\end{equation}
where $\normf{b}\leq\alpha$. Given the strictly non-negative values of $U$, we can infer that the optimization problem of maximizing $U$ corresponds to maximizing $U^2$. So we investigate the latter, 
\begin{equation}
(U^{[l+1]}_i(\tau+1))^2 = \max_{b} \normf{w^{[l+1]}_{:i:}+\rconv{g^{[l+1]}}{a^{[l]}_{:i:}}}^2
\end{equation}
\begin{equation}
=\max_{b} \normf{w^{[l+1]}_{:i:}}^2+\normf{\rconv{g^{[l+1]}}{a^{[l]}_{:i:})}}^2+2 \fdot{w^{[l+1]}_{:i}}{\rconv{g^{[l+1]}}{a^{[l]}_{:i:})}}.
\end{equation}
Using Lemma~\ref{f_dot_rconv_to_deconv}, this is equal to
\begin{equation}
 \max_{b} \normf{w^{[l+1]}_{:i:}}^2 + \normf{\rconv{g^{[l+1]}}{a^{[l]}_{:i:}}}^2+2 \fdot{\deconv{g^{[l+1]}}{w^{[l+1]}_{:i:}}}{a^{[l]}_{:i:}}.
\end{equation}
Using forward pass definition in Eq.~\eqref{forward_pass:CNN:short} we can rewrite this as:  
\begin{equation}\begin{aligned}
\label{first_equation_using_forward_pass:CNN}
\max_{b} \normf{w^{[l+1]}_{:i:}}^2 +\normf{\rconv{g^{[l+1]}}{\conv{a^{[l-1]}}{w^{[l]}_{i}}\odot M^{[l]}_{:i:}+o^{[l]}_{:i:} }}^2\\
+2 \fdot{\deconv{g^{[l+1]}}{w^{[l+1]}_{:i:}}}{\conv{a^{[l-1]}}{w^{[l]}_{i}}\odot M^{[l]}_{:i:}+o^{[l]}_{:i:}},
\end{aligned}\end{equation}

where $M^{[l]}={f^{[l]}}^{'}(a^{[l]})$ and $o^{[l]}$ is the offset of the linear function $f^{[l]}$ in $a^{[l]}$. Using $w^{[l]}_i=w^{[l]}_i(\tau-1)+b$, we deduce:
\begin{equation}\begin{aligned}
(U^{[l+1]}_i(\tau+1))^2 = \max_{b} \normf{w^{[l+1]}_{:i:}}^2 +\normf{\rconv{g^{[l+1]}}{\conv{a^{[l-1]}}{w^{[l]}_{i}(\tau-1)+b}\odot M^{[l]}_{:i:} +o^{[l]}_{:i:} }}^2\\
+2 \fdot{\deconv{g^{[l+1]}}{w^{[l+1]}_{:i:}}}{\conv{a^{[l-1]}}{w^{[l]}_{i}(\tau-1)+b}\odot M^{[l]}_{:i:} +o^{[l]}_{:i:} }
\end{aligned}\end{equation}
\begin{equation}\begin{aligned}
(U^{[l+1]}_i(\tau+1))^2 = \max_{b} \normf{w^{[l+1]}_{:i:}}^2 +\normf{\rconv{g^{[l+1]}}{\conv{a^{[l-1]}}{w^{[l]}_{i}(\tau-1)+b}\odot M^{[l]}_{:i:} }}^2\\
+2 \fdot{\deconv{g^{[l+1]}}{w^{[l+1]}_{:i:}}}{\conv{a^{[l-1]}}{w^{[l]}_{i}(\tau-1)+b}\odot M^{[l]}_{:i:} }\\
+\normf{\rconv{g^{[l+1]}}{\conv{a^{[l-1]}}{o^{[l]}_{:i:}} }}^2
+2\fdot{\deconv{g^{[l+1]}}{w^{[l+1]}_{:i:}}}{\conv{a^{[l-1]}}{o^{[l]}_{:i:}} }
\end{aligned}\end{equation}

\begin{equation}\begin{aligned}
=\max_{b} 
\normf{w^{[l+1]}_{:i:}}^2 +\normf{\rconv{g^{[l+1]}}{\conv{a^{[l-1]}}{w^{[l]}_{i}(\tau-1)}\odot M^{[l]}_{:i:}  }}^2\\
+\normf{\rconv{g^{[l+1]}}{\conv{a^{[l-1]}}{b}\odot M^{[l]}_{:i:} }}^2\\
+\fdot{\rconv{g^{[l+1]}}{\conv{a^{[l-1]}}{w^{[l]}_{i}(\tau-1)}\odot M^{[l]}_{:i:} }}{\rconv{g^{[l+1]}}{\conv{a^{[l-1]}}{b}\odot M^{[l]}_{:i:} }}\\
+2 \fdot{\deconv{g^{[l+1]}}{w^{[l+1]}_{:i:}}}{\conv{a^{[l-1]}}{w^{[l]}_{i}(\tau-1)}\odot M^{[l]}_{:i:}}\\
+2 \fdot{\deconv{g^{[l+1]}}{w^{[l+1]}_{:i:}}}{\conv{a^{[l-1]}}{b}\odot M^{[l]}_{:i:}}\\
+\normf{\rconv{g^{[l+1]}}{\conv{a^{[l-1]}}{o^{[l]}_{:i:}} }}^2
+2\fdot{\deconv{g^{[l+1]}}{w^{[l+1]}_{:i:}}}{\conv{a^{[l-1]}}{o^{[l]}_{:i:}}}
\end{aligned}\end{equation}
Given the theory assumptions, we know $g^{[l+1]}(\tau)$ is independent of the choice of $b$. Also, By Lemma \ref{changes_are_small_lemma:CNN} we have $M^{[l]}(\tau)$ and $o^{[l]}$ are independent of the choice of $b$. In addition $w^{[l+1]}(\tau)$and $a^{[l-1]}(\tau)$ are independent of the choice of $b$ by their definition. Using this statement, we can decouple some of the terms from the maximum and drive :
\begin{equation}\begin{aligned}
\label{temporal:equation:92}
b^*=\argmax_{b}
\normf{\rconv{g^{[l+1]}}{\conv{a^{[l-1]}}{b}\odot M^{[l]}_{:i:} }}^2\\
+\fdot{\rconv{g^{[l+1]}}{\conv{a^{[l-1]}}{w^{[l]}_{i}(\tau-1)}\odot M^{[l]}_{:i:} }}{\rconv{g^{[l+1]}}{\conv{a^{[l-1]}}{b}\odot M^{[l]}_{:i:} }}\\
+2 \fdot{\deconv{g^{[l+1]}}{w^{[l+1]}_{:i}}}{\conv{a^{[l-1]}}{b}\odot M^{[l]}_{:i:}}.
\end{aligned}\end{equation}
Since $\fdot{.}{.}$ calculate a sum over multiplication of elements, we can transfer $M^{[l]}$ to the left side of the last term; therefore:
\begin{equation}\begin{aligned}
\label{temporal:equation:91}
b^*=\argmax_{b}
\normf{\rconv{g^{[l+1]}}{\conv{a^{[l-1]}}{b}\odot M^{[l]}_{:i:} }}^2\\
+\fdot{\rconv{g^{[l+1]}}{\conv{a^{[l-1]}}{w^{[l]}_{i}(\tau-1)}\odot M^{[l]}_{:i:} }}{\rconv{g^{[l+1]}}{\conv{a^{[l-1]}}{b}\odot M^{[l]}_{:i:} }}\\
+2 \fdot{\deconv{g^{[l+1]}}{w^{[l+1]}_{:i}}\odot M^{[l]}_{:i:}}{\conv{a^{[l-1]}}{b}}.
\end{aligned}\end{equation}
Using definition of $r$ from Eq.~\eqref{hiddenLayer_R:CNN}, we obtain
\begin{equation}\begin{aligned}\label{first_equation_b*:CNN}
b^*=\argmax_{b}
\normf{\rconv{g^{[l+1]}}{\conv{a^{[l-1]}}{b}\odot M^{[l]}_{:i:} }}^2\\
+\fdot{\rconv{g^{[l+1]}}{\conv{a^{[l-1]}}{w^{[l]}_{i}(\tau-1)}\odot M^{[l]}_{:i:} }}{\rconv{g^{[l+1]}}{\conv{a^{[l-1]}}{b}\odot M^{[l]}_{:i:} }}\\
+2 \fdot{r^{[l]}_{:i:}}{\conv{a^{[l-1]}}{b}}.
\end{aligned}\end{equation}
 From Lemma~\ref{g_is_in_order_alpha:CNN} we know $g^{[l+1]} \propto \alpha$, and $r^{[l]} \propto \alpha$. In addition, we have $\normf{b}\leq\alpha$.

Combining these two allow us to deduce that a positive constant $c_1$ exists such that $\max_{b} \normf{\rconv{g^{[l+1]}}{\conv{a^{[l-1]}}{b}\odot M^{[l]}_{:i:} }}^2 < c_1 \alpha^4$. Similarly, a positive constant $c_2$ exists such that 
\begin{equation}\begin{aligned}
\max_{b}\fdot{\rconv{g^{[l+1]}}{\conv{a^{[l-1]}}{w^{[l]}_{i}(\tau-1)}\odot M^{[l]}_{:i:} }}{\rconv{g^{[l+1]}}{\conv{a^{[l-1]}}{b}\odot M^{[l]}_{:i:} }} < c_2 \alpha^3.
\end{aligned}\end{equation}

In contrast, for the third addends of Eq.~\eqref{first_equation_b*:CNN},  there exists a positive constant $c_4$ such that, $\max_{b}2 \fdot{r^{[l]}_{:i:}}{\conv{a^{[l-1]}}{b}} > c_4\alpha^2$. This is because this value scale by $\alpha^2$ when changing the $\alpha$ even if $b$ does not change in maximization for a better choice. Therefore, we can consider only the third addend in the limit, provided that it can take non-zero values. It is the case because,

\begin{equation}\begin{aligned}\label{Temporal:Why third is none negetive}
\max_{b} 2 \fdot{r^{[l]}_{:i:}}{\conv{a^{[l-1]}}{b}}=
\max_{b} 2 \fdot{\rconv{r^{[l]}_{:i:}}{a^{[l-1]}}}{b}>0.
\end{aligned}\end{equation}
The equality in \eqref{Temporal:Why third is none negetive} is obtained using Lemma~\ref{f_dot_conv_to_rconv2}, and the inequality is 
due to the assumption (1) of the Theorem, which states that there exists a non-zero cell in $\rconv{r^{[l]}_{:i:}}{a^{[l-1]}}$.
Therefore, it is viable for the third addend to take non-zero values, and we can consider only the third addend.
Hence, 
\begin{equation}\begin{aligned}
\lim_{\alpha\to0^+}\normalizeT{b^{*}}=\lim_{\alpha\to0^+}\normalizeT{\argmax_{b}
\fdot{r^{[l]}_{:i:}}{\conv{a^{[l-1]}}{b}}}.
\end{aligned}\end{equation}
Given that a scalar does not have any effect on Argmax, it is equal to
\begin{equation}\begin{aligned}
\lim_{\alpha\to0^+}\normalizeT{\argmax_{b}
\fdot{c^{[l]}_i r^{[l]}_{:i:}}{\conv{a^{[l-1]}}{b}}}.
\end{aligned}\end{equation}
Using definition of $g$ from Eq.~\eqref{general_c:CNN}, it is equal to
\begin{equation}\begin{aligned}
\lim_{\alpha\to0^+}\normalizeT{\argmax_{b}
\fdot{g^{[l]}_{:i:}}{\conv{a^{[l-1]}}{b}}}.
\end{aligned}\end{equation}
Using Lemma~\ref{f_dot_conv_to_rconv2}, it is equal to:
\begin{equation}\begin{aligned}
\lim_{\alpha\to0^+}\normalizeT{\argmax_{b}
\fdot{\rconv{g^{[l]}_{:i:}}{a^{l-1]}}}{b}}=\lim_{\alpha\to0^+}\normalizeT{\rconv{g^{[l]}_{:i:}}{a^{l-1]}}}.
\end{aligned}\end{equation}
Using Lemma~\ref{changes_are_small_lemma:CNN} we drive:
\begin{equation}\begin{aligned}
\lim_{\alpha\to0^+}\normalizeT{b^*}=\lim_{\alpha\to0^+}\normalizeT{\rconv{g^{[l]}_{:i:}(\tau-1)}{a^{l-1]}(\tau-1)}}.
\end{aligned}\end{equation}
which is the update performed in the \algname{}.

To prove the second part of the Theorem for the hidden layers, we consider the transition from Eq.~\eqref{first_equation_b*:CNN} and the subsequent modifications imposed. Accordingly, we arrive at the following:
\begin{equation}\begin{aligned}
\lim_{\alpha\to0^+}\frac{\normf{b^{*}}}{\alpha}=
\lim_{\alpha\to0^+}\frac{\normf{\argmax_{b}
\fdot{\rconv{g^{[l]}_{:i:}}{a^{l-1]}}}{b}
}}{\alpha}=\lim_{\alpha\to0^+}\frac{\alpha}{\alpha}=1
\end{aligned}\end{equation}

The above equation allows us to affirm both aspects of the Theorem about the hidden layers. Given that the proof for the last layer filters mirrors that of the last layer's neurons as delineated in Theorem~\ref{semi_is_nash_label}, we omit this part of the proof.
\end{proof}

Following the establishment of Theorem~\ref{semi_is_nash_label_CNN}, now we can introduce the parallel counterpart of Theorem~\ref{convergence_theorem} in the CNN domain, presented as Theorem~\ref{convergence_theorem:CNN}.

\begin{theorem}
\label{convergence_theorem:CNN}
Consider a CNN with a piecewise linear activation function. Assume for all $i$ and $l$ the following conditions hold:\\
1) $\exists c,j: \sum_{s=1}^{m}{\sum_{p=1}^{D}r^{[l]}_{s,i,p}a^{[l-1]}_{s,c,p-j}} \neq 0$,\\
2) $g^{[l+1]}(\tau)$ is independent of filter $i$'s response at iteration $\tau-1$,\\ then:\\
3) $\forall_{s, j}: z^{[l]}_{s,i,j}\notin f^{[l]} \text{breakpoints}$

1)~$
\lim_{\alpha \to 0^+} \text{normalize}({{b^{[l]}_i}^*(\tau-1)}) = \text{normalize}({b^{[l]}_i(\tau-1)})
$,\\
2)~$
\lim_{\alpha \to 0^+} \frac{\normx{{b^{[l]}_i}^*(\tau-1)}}{\normx{b^{[l]}_i(\tau-1)}} = 1$.\\
for every filter $i$ in layer $l$, and every layer $l$ in the CNN.
\end{theorem}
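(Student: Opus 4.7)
The plan is to reduce Theorem~\ref{convergence_theorem:CNN} to Theorem~\ref{semi_is_nash_label_CNN} through the same backward induction on the layer index used in the proof of Theorem~\ref{convergence_theorem}. Specifically, I would perform induction on $l$ from $L$ down to $1$, proving at each stage that every filter in layer $l$ has its best response converge, both in normalized direction and in magnitude ratio, to the \algname{} response as $\alpha \to 0^+$.

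For the base case $l = L$, the hypothesis of Theorem~\ref{semi_is_nash_label_CNN} about subsequent layers employing \algname{} is vacuous, while assumptions (1)--(3) of Theorem~\ref{convergence_theorem:CNN} are exactly the per-filter conditions Theorem~\ref{semi_is_nash_label_CNN} requires; a direct application yields both limit statements for every filter in the last layer. For the inductive step, assuming convergence for all layers $l' > l$, I would verify each condition of Theorem~\ref{semi_is_nash_label_CNN} at layer $l$: the equal-$\alpha$ requirement is given; the per-filter conditions on $r^{[l]}$, on $g^{[l+1]}$, and on the pre-activations avoiding breakpoints are inherited from the global assumptions of Theorem~\ref{convergence_theorem:CNN}; and the ``filters in layers after $l$ employ \algname{}'' condition is supplied by the inductive hypothesis in the limiting sense.

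The main obstacle, exactly as in the MLP version, is reconciling the literal statement of Theorem~\ref{semi_is_nash_label_CNN}, which supposes subsequent filters run \algname{}, with an inductive hypothesis that only guarantees convergence to \algname{} as $\alpha \to 0^+$. I would handle this by emphasizing that every claim is itself an asymptotic one: by Lemma~\ref{g_is_in_order_alpha:CNN} we have $g^{[l+1]} \propto \alpha$, so deviations of subsequent filters from exact \algname{} responses only affect higher-order terms in the expansion performed inside the proof of Theorem~\ref{semi_is_nash_label_CNN}; by Lemma~\ref{changes_are_small_lemma:CNN} the quantities $M^{[l]}$ and $a^{[l-1]}$ are continuous in the small-$\alpha$ regime, so the limiting values that enter the argmax computation for filter $i$'s utility are unchanged. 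Consequently, replacing ``employs \algname{}'' by ``converges to \algname{}'' in the antecedent of Theorem~\ref{semi_is_nash_label_CNN} does not disturb the two limits it produces, the induction closes, and the conclusion of Theorem~\ref{convergence_theorem:CNN} follows for every filter in every layer of the CNN.
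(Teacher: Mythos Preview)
Your proposal is correct and follows essentially the same approach as the paper: backward induction on the layer index, with the last layer as the base case and Theorem~\ref{semi_is_nash_label_CNN} supplying the inductive step, exactly mirroring the proof of Theorem~\ref{convergence_theorem}. Your explicit discussion of the ``employs \algname{}'' versus ``converges to \algname{}'' subtlety, and the invocation of Lemmas~\ref{g_is_in_order_alpha:CNN} and~\ref{changes_are_small_lemma:CNN} to resolve it, is actually more careful than the paper's own proof, which simply asserts the induction goes through without addressing that point.
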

\begin{proof}[Proof of Theorem~\ref{convergence_theorem:CNN}]
The argument validating this Theorem remains consistent with the proof supporting Theorem~\ref{convergence_theorem}. 
Assuming that the response of the subsequent layer converges to \algname{}, when this assumption is combined with existing premises, it allows us to affirm that the response of the present layer also converges to \algname{}, using Theorem \ref{semi_is_nash_label_CNN}. Such a proposition enables the proof of the Theorem using induction, beginning with the final layer. Also, the validity of the Theorem for the final layer remains independent of the responses of other neurons; therefore, it can serve as the induction base.
\end{proof}

\section{Experiments Detail}
\label{Experiments:Appendix}
\subsection{Datasets}
 \textbf{MNIST} is a dataset of 60,000 grayscale handwritten digit images, with each image belonging to one of ten classes. Similar to CIFAR-10, 10,000 images are reserved for testing purposes, and the remaining 50,000 images are used for training. Each image in this dataset has a resolution of $28\times28$ pixels.

\textbf{Split-MNIST} is a variation of the original MNIST dataset, wherein the ten classes are divided into five distinct groups, each containing a random pair of classes. In this setting, a model is trained to perform binary classification tasks by distinguishing between the two classes within a group. Following a predetermined number of training epochs, the model proceeds to the next group and is trained on the binary classification task associated with that group. This process is repeated until the model has been trained on all five groups.

\textbf{CIFAR-10}  is a dataset comprising 60,000 natural images, each assigned to one of ten distinct classes. A subset of 10,000 images is designated as the test dataset, while the remaining 50,000 images constitute the training dataset. Each image in the dataset has a resolution of $32\times32$ pixels and consists of three color channels.

\subsection{Data Preprocessing and Validation}
In the data preprocessing stage, different techniques are applied to the training, validation, and testing datasets. For the MNIST and Split-MNIST training dataset, a random crop method is employed as a data augmentation technique, and for CIFAR-10, we use random horizontal flip, random rotation, random affine, and color jittering to increase the network's ability to generalize to unseen data. However, these methods are not applied to the validation and testing datasets to ensure a fair evaluation of the model's performance. Images are normalized prior to being fed into the model to help training. 

Before initiating the training and hyperparameter grid search, a validation set is constructed using a portion of the training samples. The validation dataset consists of 10,000 images, leaving 40,000 images for training purposes. For the Split-MNIST dataset, hyperparameter tuning is conducted on an entirely distinct class division to ensure an unbiased evaluation.

\subsection{Models' Architecture}
We employ different network architectures depending on the dataset being used. For MNIST dataset, a Multi-Layer Perceptron (MLP) is utilized. The MLP is designed with four hidden layers, each containing 2,000 neurons and ReLU (Rectified Linear Unit) activation functions. The output layer uses Softmax to produce probabilities. For the Split-MNIST experiment, we employ the same network architecture as previously described; however, the output layer is modified to consist of only two neurons, corresponding to the binary classification tasks. As the model progresses through each task, the same neurons are utilized for classification in the subsequent tasks. 

On the other hand, for the CIFAR-10 dataset, we employ the LeNet architecture \cite{lecun1998gradient}, which is a convolutional neural network (CNN) consisting of two sets of convolutional and max pooling layers, followed by two fully connected layers and an output layer which uses Softmax to produce probabilities. 
The first convolutional layer has a kernel size of $5\times5$, a stride of $1$, and six output feature maps, while the second convolutional layer has a kernel size of $5\times5$, a stride of $1$, and sixteen output feature maps. Both max pooling layers have a kernel size of $2\times2$ and a stride of $2$. The first fully connected layer has $120$ neurons, the second has $84$ neurons, and the output layer has ten neurons corresponding to the ten classes in the dataset. ReLU activation functions are used between the linear layers as a non-linearity.

\subsection{Model Training}
In the experimental phase of this study, Stochastic Gradient Descent (SGD) was employed as Error Backkpropagation baseline. To optimize the learning process, a learning rate scheduler was implemented, designed to decrease the learning rate by a factor of 0.1 in the middle of the training process. For the subset of experiments focused on catastrophic forgetting, the same learning rate scheduler was utilized for each individual task. 

\section{Additional Studies}
In addition to the experimental results in the main body of the paper we present additional results in this section. In~\ref{appendix:depthanalysis} we extend our depth analysis to a new architecture and dataset. 

\subsection{Depth analysis}
The findings illustrated in Fig.~\ref{fig:ResNet50_depth_test} demonstrate that \algname{} can be scaled to deep models, butwith a reduction in accuracy when compared to Error Backpropagation. Given that this architecture is designed for Error Backpropagation, the suboptimal performance exhibited by \algname{} is to be expected.

\label{appendix:depthanalysis}
\begin{figure}
    \centering
    \includegraphics[width=0.6\textwidth]{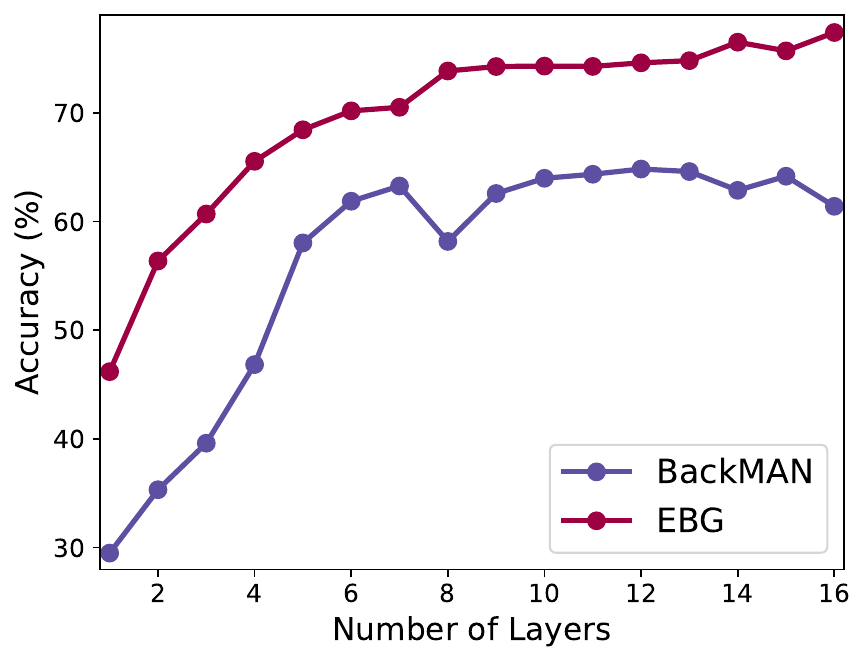}
      \caption{
The analysis examines the impact of network depth on the performance of Error Backpropagation and \algname{} algorithms, employing the ResNet-50 architecture and the CIFAR-100 dataset. ResNet-50 comprises 16 blocks; the accuracy is evaluated when a suffix of these blocks is removed, effectively varying the network depth. For each specified depth, models are trained from scratch.}
    \label{fig:ResNet50_depth_test}
\end{figure}

\subsection{Robustness to Hyperparameters}
\label{appendix:robustnesstohyperparameters}
One crucial aspect of a training algorithm is the ease of hyperparameter tuning associated with it. In Table.~\ref{Table:hyperparametertuning}, we present the accuracy of the MNIST experiment discussed in Section~\ref{experiment_classification} , but with all hyperparameters as opposed to only the fine-tuned ones. The results suggest that tuning \algname{} is not significantly more challenging than tuning Error Backpropagation.

\begin{table}
    \label{Table:hyperparametertuning}
  \caption{Results of the hyperparameter tuning for the MNIST experiment. The model is composed of four hidden linear layers with ReLU activation in between. The training has 25 epochs. The values presented indicate the accuracy of the model.}
  \centering
  \begin{tabular}{ccccccccccc}
    \toprule
     & \multicolumn{5}{c}{Error-Backpropagation} & \multicolumn{5}{c}{\algname{}} \\
     \cmidrule(lr){2-6}  \cmidrule(l){7-11}
    Learning Rate/Batch Size     & 64   & 128  & 256  & 512  & 1024 & 64   & 128  & 256  & 512  & 1024  \\
    \midrule
     0.002  & 98.5 & 98.0 & 96.6 & 93.8 & 90.0 & 93.9 & 90.1 & 75.5 & 57.8 & 52.0 \\
     0.01   & 98.8 & 98.9 & 98.6 & 98.3 & 97.1 & 98.5 & 97.1 & 95.2 & 91.2 & 78.2 \\
     0.05   & 99.2 & 99.2 & 98.9 & 98.9 & 98.6 & 99.1 & 98.8 & 98.1 & 97.4 & 96.4 \\
     0.25   & 99.2 & 99.3 & 99.1 & 99.1 & 98.4 & 0.1  & 98.3 & 98.7 & 98.7 & 98.4 \\
    \bottomrule
  \end{tabular}
\end{table}

\subsection{Damping Effect of \algname{} on Signals}
\label{appendix:DampingEffect}
\algname{} modifies gradient-like signals by multiplying them with a scalar $c$ before backpropagation. To understand the role of this variable during training, we conduct an experiment and visualize the $c$ value in Fig.~\ref{fig:DampingEffect}. The results show when norm of the signal is higer the $c$ value is lower; therefore, it reduces the norm of the backpropagated signal. This results could explain the results of Fig.~\ref{fig:depth}. For very deep neural network when gradient wanishing and gradint explotion is a serious problem, we see that \algname{} surpass error-backpropagation in terms of accuracy which could show that the model suffer less from these two problems. 

\begin{figure}
    \centering
     \begin{adjustbox}{max width=1.2\textwidth,center}
    \includegraphics{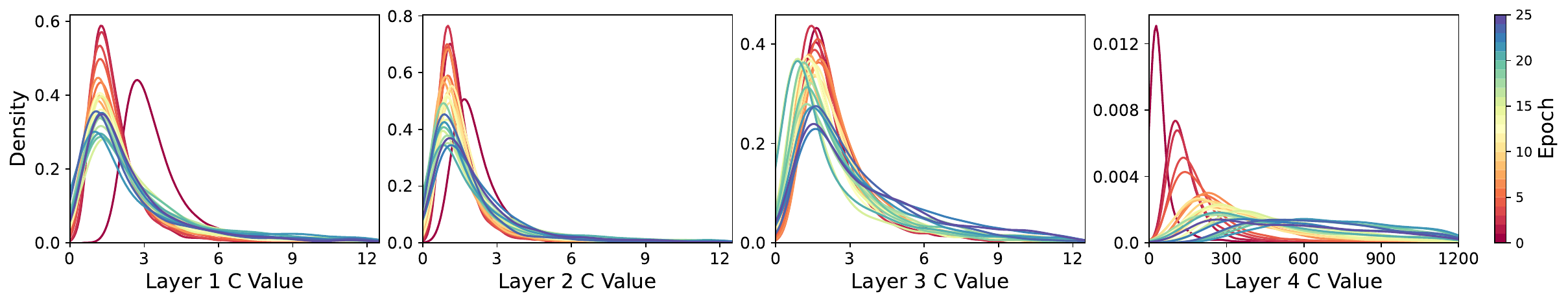}
    \end{adjustbox}
    \begin{adjustbox}{max width=1.2\textwidth,center}
    \includegraphics{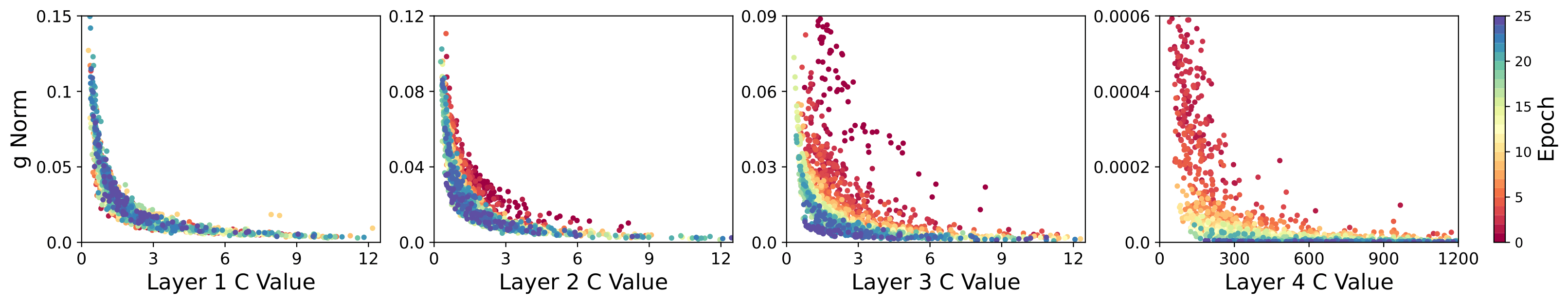}
    \end{adjustbox}
    \caption{
    Throughout the training, we visualize the distribution of c values for different neurons in each layer. On the bottom row we displays the correlation between the norm of the g vector and the c value of the neurons throughout the training process. The g vector behaves similarly to the gradient in error-backpropagation. The plot indicates that when the norm of g increases, the C value decreases and serves as a normalization factor. The network and the training process is identical to the main MNIST experiment. The model consists of four hidden layers, with each layer containing 2000 neurons. ReLU activation is utilized between the layers. }
    \label{fig:DampingEffect}
\end{figure}

\section{Implementation Details}
\label{ImplementationAppendix}
The primary objective of this section is to facilitate a comprehensive understanding of the code structure, thereby simplifying its readability. Additionally, this section provides an in-depth explanation of the technical implementation facilitated through the PyTorch library, which allows for the modification of gradients during the backward pass.

To construct a model leveraging the \algname{} algorithm, we use our customized layers instead of conventional Linear or Convolutional layers. These distinctive layers are designed to mimic the behavior of conventional layers (referred to as \textbf{F}) during the forward pass, yet they differ in their backward propagation: they return $g$ signals instead of gradients. Fig.~\ref{fig:imple} delineates the operation of each of these layers. Initially, we explicate how they generate the output using the layer's weights and input. Subsequently, we elucidate the backward pass, describing the process of deriving weight updates and $g$ signals based on the $g$ signals backpropagated from the succeeding layer. It is paramount to underscore that these layers transmit $g$ signals "As" gradient; therefore, they are understood and processed as gradients within the PyTorch library.

Regarding the forward pass, each layer processes by receiving the weights and inputs. Using these matrices, it computes the output in the usual manner via function F, where F signifies a function that could manifest as a linear or convolutional layer. Nonetheless, the output is not returned directly. Instead, the module invokes its subordinate module, "Fake Gradient Producer," using the output value, weights, and input matrix. The Fake Gradient Producer recalculates the output using the weights and inputs, averages this calculated output with the output provided by the parent module, and returns this value as the layer's output. Although the outputs generated by function F are identical, rendering the second call of F ostensibly redundant or unnecessary from the forward pass's perspective, this reiteration is deliberately integrated to modulate and regulate the backward pass, thus playing an integral part in the overarching process.

To comprehend the backward pass, it is imperative to understand that the Pytorch library allows us to register a hook on modules. This hook is activated immediately before the module returns its gradients, enabling us to modify the gradients. In our context, the Fake Gradient Producer module registers a backward hook, which is triggered when the gradients with respect to the inputs of the Fake Gradient Producer are computed. Within this hook, we can determine the extent to which each neuron/filter/agent updates its weights, which embodies $\frac{1}{c}$ in \algname{}. We subsequently divide the output gradients by this value, similar to how $g$ is calculated using $r$ and $c$. This method guarantees that the $\ell_2$-norm of the weight gradients is $1$, ultimately resulting in an $\alpha$ $\ell_2$-norm of weight update, utilizing an $\alpha$ learning rate.

\begin{figure}
    \centering
    \includegraphics[width=0.70\textwidth]{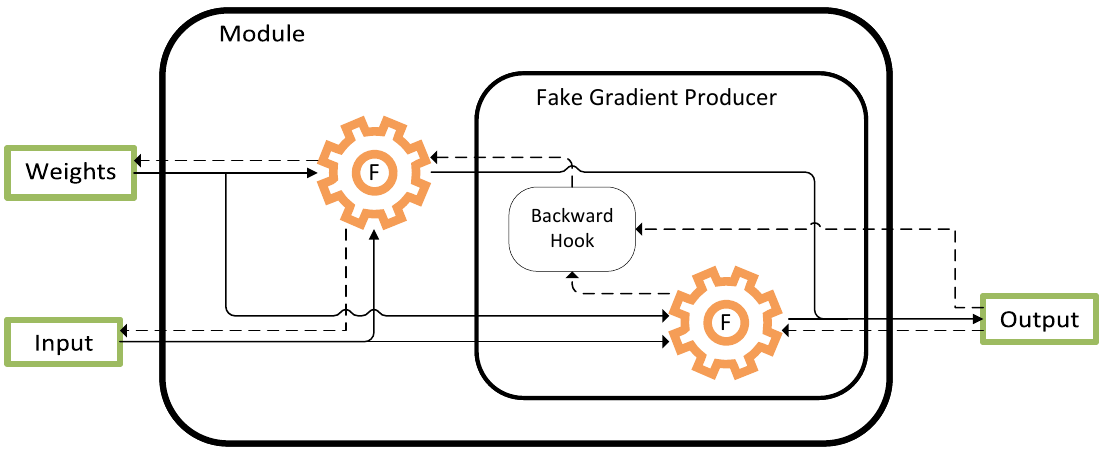}
    \caption{
    The figure presents a schematic representation of the implementation process. Solid lines denote the forward pass, and dashed lines denote the backward pass. Moreover, F represents a function that could be a linear or convolutional module.  
    When two lines converge, their values are averaged.}
    \label{fig:imple}
\end{figure}

Upon looking at Fig.~\ref{fig:imple}, the results presented in Table~\ref{performanceTabel} are intuitive, given that function F is computed twice. This implies that the execution time of our algorithm would be approximately twice the time for the baseline algorithm.

\end{document}